\documentclass[twoside]{article}

\usepackage[accepted]{aistats/aistats2025}
\usepackage{url}
\usepackage{amsthm} 
\usepackage{amsmath}
\usepackage{amssymb} 
\usepackage{thmtools} 
\usepackage{mathtools} 
\usepackage{stmaryrd}
\usepackage{cuted}
\usepackage{quiver}

\usepackage{xspace}
\usepackage{float}
\usepackage{enumitem}
\usepackage[framemethod=TikZ]{mdframed}
\usepackage{xcolor}

\usepackage{algorithm}
\usepackage{algorithmic}

\newtheorem{definition}{Definition}
\newtheorem{theorem}{Theorem}

\newtheorem{lemma}{Lemma}
\newtheorem{proposition}{Proposition}

\usepackage{natbib}

\usepackage[capitalise,noabbrev]{cleveref}

\newcommand{\defn}[1]{\emph{#1}}
\newcommand{\dimnin}{\ensuremath{{n_{\mathrm{in}}}}}
\newcommand{\dimnout}{\ensuremath{{n_{\mathrm{out}}}}}

\newcommand{\kanlayer}{\ensuremath{\mathbf{\Phi}}}
\newcommand\mydots{\makebox[1em][c]{.\hfil.\hfil.}}
\DeclareMathOperator{\relu}{ReLU}

\newcommand{\reluspace}{\textbf{ReLU}}
\newcommand{\kanspace}{\textbf{KAN}}


\newcount\foreachcount

\makeatletter
\let\ea\expandafter
\def\foreachLetter#1#2#3{\foreachcount=#1
  \ea\loop\ea\ea\ea#3\@Alph\foreachcount
  \advance\foreachcount by 1
  \ifnum\foreachcount<#2\repeat}
\def\definecal#1{\ea\gdef\csname c#1\endcsname{\ensuremath{\mathcal{#1}}\xspace}}
\foreachLetter{1}{27}{\definecal}

\makeatletter
\let\ea\expandafter
\def\foreachLetter#1#2#3{\foreachcount=#1
  \ea\loop\ea\ea\ea#3\@Alph\foreachcount
  \advance\foreachcount by 1
  \ifnum\foreachcount<#2\repeat}
\def\definecal#1{\ea\gdef\csname b#1\endcsname{\ensuremath{\mathbf{#1}}\xspace}}
\foreachLetter{1}{27}{\definecal}



\renewcommand{\bR}{\mathbb{R}}

%

%



\bibliographystyle{apalike}

\begin{document}

%
\runningtitle{Relating Piecewise Linear KANs to ReLU Networks}

%

\twocolumn[

\aistatstitle{Relating Piecewise Linear Kolmogorov Arnold Networks \\to ReLU Networks}

\aistatsauthor{ Nandi Schoots \And Mattia Jacopo Villani \And Niels uit de Bos }

\aistatsaddress{ University of Oxford \And  King's College London \And MATS } ]

\begin{abstract}
Kolmogorov-Arnold Networks are a new family of neural network architectures which holds promise for overcoming the curse of dimensionality and has interpretability benefits \citep{liu2024kan}. 
In this paper, we explore the connection between Kolmogorov Arnold Networks (KANs) with piecewise linear (univariate real) functions and ReLU networks.
We provide completely explicit constructions to convert a piecewise linear KAN into a ReLU network and vice versa.

\end{abstract}

\section{INTRODUCTION}

Architectural innovations are key drivers in the evolution of deep learning. 
Advances in architecture design, such as the introduction of convolutional \citep{lecun1989conv} or attention layers~\citep{vaswani2017attention}, yield significant performance improvements in AI systems.
Very recently, \cite{liu2024kan} introduced Kolmogorov Arnold Networks (KANs), 
an alternative to feedforward-style architectures in deep learning.
The authors argue that KANs are more interpretable than traditional feedforward networks.
However, they also find that KANs are typically 10x slower to train than MLPs, 
given the same number of parameters.

Our paper introduces completely explicit constructions for converting a ReLU network into a KAN with piecewise linear activation functions and vice versa~(\cref{sec:converting-pl-kans-to-relu}).
This means users can train a ReLU network, translate it to a KAN, and benefit from the enhanced interpretability of the KAN.
Moreover, unifying the architectures under a common framework facilitates the application of existing tools and theories developed for the ReLU network, such as analyzing symmetries and polyhedral regions, or initialisation techniques and research on generalisation bounds. 
In other words, we can have the best of both worlds.

Our conversion process is efficient in terms of the number of non-zero parameters
of the converted network: the KAN-to-ReLU conversion does not increase the number of non-zero parameters,
while the ReLU-to-KAN conversion increases the number of non-zero parameters
by a term that is linear in the number of neurons (\cref{sec:conversion-efficiency}).
However, in the KAN-to-ReLU conversion,
we end up with a very wide network with sparse weight matrices
(\cref{sec:class-embedding}).


We show that, for a given parameter budget, KANs produce a finer polyhedral complex than ReLU networks. 
Specifically, we show that the upper bound on the number of linear regions implemented by a KAN is higher 
(\cref{sec:polyhedral-decomposition}).  
Parameter efficiency is key to enabling the use of lightweight models at inference time.


Throughout this paper, the term KAN refers to a KAN
with piecewise linear activation functions.

\section{RELATED WORKS}

Kolmogorov Arnold's result, also known as the \textit{Kolmogorov Superposition Theorem} (KST) 
shows that every function can be written using univariate functions and summing \citep{kolmogorov1956superpositions}.
The recent \cite{liu2024kan} construction relies on this result.

Previously, several other attempts to unify KST and Deep Learning theory have been made \citep{schmidt2021kolmogorov, ismayilova2024kolmogorov}.

KANs represent multivariate functions as compositions and superpositions of univariate functions. 
These representations are often considered more interpretable \citep{yang2021gami} because they are based on univariate functions.
However, these functions can be very complex, and, e.g., in the case of piecewise linear univariate functions, they may have a large number of pieces (on which the function does not necessarily monotonically increase). 
Additionally, constructive proofs of the Kolmogorov Arnold theorem typically find highly irregular and erratic univariate inner and outer functions \citep{braun2009constructive}, decreasing their interpretability. 
Some authors try to impose Lipschitz continuity to enforce higher regularity in the inner and outer functions \citep{actor2017algorithm}; however, this comes at a cost of a large number of total functions. 
Moreover, the large number of univariates increases network complexity. 

\section{BACKGROUND}
\label{sec:background}

In this section, we recall some of the core ideas and definitions from \cite{liu2024kan} for the reader's benefit.
We also discuss piecewise linear functions and ReLU networks.

The \textbf{Kolmogorov Arnold Theorem}
(or the superposition theorem) states the following. 
Let $f \colon [0,1]^n \rightarrow \mathbb{R}$ be a continuous multivariate real function.
Then there are a finite number of continuous univariate real functions $\Phi_q$ and $\phi_{p}^q$ such that $f$ can be written as
\[f(\mathbf{x}) = \sum_{q=1}^{2n+1} \Phi_q \left( \sum_{p=1}^n \phi_{p}^q(x_p) \right).\]


\textbf{Kolmogorov Arnold Networks} (KANs)
were recently introduced and generalize Kolmogorov Arnold representations.
\begin{definition}
A \defn{KAN layer} with input dimension $\dimnin$ and output dimension $\dimnout$ is given by a $\dimnout$-by-$\dimnin$ matrix
$\kanlayer = \{\phi_{p}^q\}_{p=1, \mathellipsis, \dimnin }^{q=1, \mathellipsis, \dimnout}$ of univariate real functions $\phi_{p}^q \colon \bR \to \bR$ that we call \defn{activation functions}.
It represents the function
\begin{equation*}
\begin{split}
    \kanlayer \colon \bR^{\dimnin} & \to \bR^\dimnout, \\
     \mathbf x = (x_i)_{i=1, \ldots, \dimnin} & \mapsto \kanlayer(\mathbf x) = \left( \sum_{i=1}^{\dimnin} \phi_{i}^j(x_i) \right)_{j=1, \ldots, \dimnout}
\end{split}
\end{equation*}
\end{definition}

\begin{definition}
A \defn{Kolmogorov Arnold network (KAN)} is a composition of $L$ KAN layers $\kanlayer_{L-1} \circ \mathellipsis \circ \kanlayer_0$.
In the case that the last layer has output dimension 1, the function represented by the KAN takes the form

\begin{align}\label{eq:kan} 
&f(\mathbf{x}) = \mspace{-5mu}
       \sum_{i_{L-1}=1}^{n_{L-1}} \phi_{i_{L-1}}^{L-1, i_L} \mspace{-5mu}
       \left( \mspace{-3mu} \mydots \sum_{i_1=1}^{n_1} \phi_{i_1}^{1, i_2} 
       \left( \sum_{i_0=1}^{n_0} \phi_{i_0}^{0, i_1} (x_{i_0}) \right) \mydots \mspace{-3mu} \right)
\end{align}
where $n_\ell$ is the input dimension of the $\ell$-th KAN layer
$\kanlayer_\ell = \{\phi_{p}^{\ell, q}\}_{p=1, \mathellipsis, n_{\ell}}^{q=1, \mathellipsis, n_{\ell+1}}$.
If a KAN has $L$ layers, we also sometimes say that it has $L - 1$ \defn{hidden layers}.
\end{definition} 


A \textbf{piecewise linear KAN} is a KAN in which each activation function $\phi_{i_{l-1}}^{l-1,i_l}:\mathbb{R}\rightarrow \mathbb{R}$ is piecewise linear
with a finite number of segments. 
Going forward, in the interest of presentation, whenever we say KAN we typically mean a piecewise linear KAN,
but sometimes we will emphasize the piecewise linearity explicitly.

Any piecewise linear function $f$ can be represented as a \textbf{polyhedral complex} $\mathcal{C}(f) = (\Omega, (\alpha_\omega, \beta_\omega)_{\omega \in \Omega})$,
where $\Omega$ is a partition of the input space $\mathbb{R}^{n}$, 
and $(\alpha_\omega, \beta_\omega) \in \bR \times \bR$
are linear coefficients for $f|_\omega$, i.e., $f|_\omega(x) = \alpha_\omega x + \beta_\omega$.

\textbf{Re}ctified \textbf{L}inear \textbf{U}nit (ReLU) networks are a popular family of architectures for deep learning. 
Both ReLU networks and piecewise linear KANs are examples of piecewise linear functions. 
This means that both can be represented as polyhedral complexes through a polyhedral decomposition. 
In the ReLU case, such decompositions have received large theoretical \citep{montufar2014number, arora2016understanding, serra2018bounding} and empirical \citep{raghu2017expressive, humayun2022exact, berzins2023polyhedral, masden2022algorithmic} attention, and their properties are an object of interest. 
In this paper we develop the first analysis of the polyhedral decomposition of piecewise linear KANs.

\section{CONVERTING KANS TO RELU NETWORKS AND VICE VERSA}
\label{sec:converting-pl-kans-to-relu}

In this section we provide methods for translating a ReLU to a KAN with piecewise linear activation functions and vice versa.
See \cref{appendix:b-spline-kans} for a discussion of how the ideas in this section
can be extended to convert KANs with B-spline activation functions
to and from a more unconventional feedforward architecture with both ReLU activation functions and monomial activation functions.

\subsection{For every ReLU there's a KAN}

\begin{theorem}\label{thm:relu-to-kan}
    Let $g \colon \bR^n \rightarrow \bR^m$ be a feedforward network with activation functions from a family $\mathcal F$.
    There exists a KAN $f\colon \bR^n \rightarrow \bR^m$
    with activation functions that are either affine linear or from $\mathcal F$
    such that $f(x) = g(x)$ for all $ x \in \mathbb{R}^n$.
    In particular, if $g$ is a ReLU network, then there exists a piecewise linear KAN $f$ with
    $f(x) = g(x)$ for all $x \in \bR^n$.
\end{theorem}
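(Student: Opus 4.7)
The plan is to simulate each feedforward layer by a pair of KAN layers: one KAN layer with affine linear activations that implements the affine map $x \mapsto Wx + b$, followed by a second KAN layer that applies the nonlinearity $\sigma \in \mathcal F$ coordinate-wise. Composing these pairs for each feedforward layer yields a KAN computing the same function as $g$.

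More concretely, suppose a feedforward layer sends $x \in \bR^n$ to $\sigma(Wx+b) \in \bR^m$. For the linear part, define a KAN layer $\kanlayer^{\mathrm{lin}}$ with input dimension $n$ and output dimension $m$ whose activation on edge $(i,j)$ is the affine function
\[
\phi_i^j(t) = W_{ji}\,t + \tfrac{1}{n} b_j.
\]
Summing over $i$ gives $\sum_{i=1}^n \phi_i^j(x_i) = (Wx+b)_j$, as required. For the nonlinear part, define a KAN layer $\kanlayer^{\mathrm{nl}}$ with input and output dimension $m$ whose activation on edge $(i,j)$ is
\[
\psi_i^j(t) = \begin{cases} \sigma(t) & \text{if } i = j,\\ 0 & \text{otherwise.}\end{cases}
\]
Then $\kanlayer^{\mathrm{nl}} \circ \kanlayer^{\mathrm{lin}}$ on input $x$ returns $\sigma(Wx+b)$ coordinate-wise, matching the feedforward layer exactly. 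All activations used are either affine linear (in $\kanlayer^{\mathrm{lin}}$) or equal to the zero function or to $\sigma \in \mathcal F$ (in $\kanlayer^{\mathrm{nl}}$); the zero function is affine, so it is allowed.

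Iterating this construction across all layers of $g$ and composing gives a KAN $f$ with $f(x) = g(x)$ for every $x \in \bR^n$. For the final claim, observe that if $g$ is a ReLU network then $\sigma = \ReLU$ is piecewise linear with finitely many pieces, and the affine functions used in $\kanlayer^{\mathrm{lin}}$ are trivially piecewise linear, so every activation in $f$ is piecewise linear with finitely many segments; hence $f$ is a piecewise linear KAN.

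There is no serious obstacle here: the construction is mechanical once one realises that a KAN layer with affine activations can implement any affine map, and a KAN layer whose activation matrix is diagonal (with off-diagonal zeros) can implement any coordinate-wise nonlinearity. The only care needed is to verify that constant offsets from the bias $b_j$ can be absorbed into the affine edge activations (done above by splitting $b_j$ equally across the $n$ incoming edges) and that the zero function counts as an affine activation so that the diagonal layer $\kanlayer^{\mathrm{nl}}$ is legal.
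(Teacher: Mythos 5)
Your proof is correct, but it takes a different route from the paper. The paper keeps the KAN depth equal to the feedforward depth by \emph{fusing} each linear map with the preceding nonlinearity into a single KAN activation: for $l>0$ it sets $\phi_{i_l}^{l,i_{l+1}}(s) = W^{(l)}_{i_l,i_{l+1}}\,\sigma(s) + B^{(l)}_{i_{l+1}}$ (bias attached only to the $i_l=1$ edge), so one KAN layer absorbs both the activation of layer $l-1$ and the affine map of layer $l$. You instead split each feedforward layer into two KAN layers --- an affine one for $x\mapsto Wx+b$ and a ``diagonal'' one with $\sigma$ on the diagonal and zero off the diagonal --- which roughly doubles the depth and introduces $m^2-m$ identically-zero activations per nonlinearity. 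What your version buys is modularity and a more literal match to the theorem statement: your nonlinear activations are exactly $\sigma\in\mathcal F$, whereas the paper's fused activations are affine post-compositions $w\sigma(s)+b$ of elements of $\mathcal F$, which stretches the wording slightly. What the paper's version buys is depth preservation, which it relies on later (the remark that the KAN has the same depth as the ReLU network, and the class embedding $\reluspace(L+1,\cdot)\subseteq\kanspace(L+1,\cdot,1)$); your construction would weaken those downstream statements. Two small points to tidy: handle the final affine output layer explicitly (use only $\kanlayer^{\mathrm{lin}}$ there, with no $\kanlayer^{\mathrm{nl}}$), and note that splitting $b_j$ as $b_j/n$ across incoming edges, while fine, is a choice --- the paper instead puts the whole bias on a single edge.
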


\begin{proof}
Suppose that $g$ is a one layer network with weight vector $W$ and bias $b$, 
then we can write a KAN as follows
\[f(x) = \sigma \left( \sum_{i_0 = 1}^n \phi_{i_0}(x_{i_0}) \right),\]
where $\sigma$ is the univariate real (ReLU) activation function, 
$\phi_{i_0}(x_{i_0}) = w_{i_0} x_{i_0} + b$ for $i_0 = 1$ and
$\phi_{i_0}(x_{i_0}) = w_{i_0} x_{i_0} $ for $i_0 > 1$.

We will now give a general formulation of the KAN corresponding to a network with $L$ layers.
Let $g \colon \mathbb{R}^n \rightarrow \mathbb{R}^m$ be a composition of $L \in \mathbb{N}$ layers, where for each $l \in \{0, \ldots, L-2\}$ the layer is given by: 
\[ \chi^{(l+1)} = \sigma( {W^{(l)}} \chi^{(l)} + B^{(l)}),\]
where $\sigma$ is an element-wise activation function and $\chi^{(l)}$ are the activations,
with $\chi^{(0)} = x$ and output layer  
$g(x) = W^{(L-1)} \chi^{(L-1)} + B^{(L-1)}.$

We define 
\begin{align*} 
&f(\mathbf{x}) = \mspace{-5mu}
       \sum_{i_{L-1}=1}^{n_{L-1}} \phi_{i_{L-1}}^{L-1, i_L} \mspace{-5mu}
       \left( \mspace{-3mu} \mydots \sum_{i_1=1}^{n_1} \phi_{i_1}^{1, i_2} 
       \left( \sum_{i_0=1}^{n_0} \phi_{i_0}^{0, i_1} (x_{i_0}) \right) \mydots \mspace{-3mu} \right)
\end{align*}
where for $l=0$ we define 
\begin{align*}
    \phi_{i_0}^{0, i_1}(x_{i_0}) &= W^{(0)}_{i_0, i_1} x_{i_0} + B^{(0)}_{i_1} \text{ for }i_0 = 1 \text{ and } \\
    \phi_{i_0}^{0, i_1}(x_{i_0}) &= W^{(0)}_{i_0, i_1} x_{i_0} \hspace{1cm} \text{ for }i_0 > 1;
\end{align*}
and for $l>0$ we define 
\begin{align*}
    \phi_{i_{l}}^{l, i_{l+1}}(s) &= W^{(l)}_{i_{l}, i_{l+1}} \sigma (s) + B^{(l)}_{i_{l+1}} \text{ for }i_{l} = 1 \text{ and } \\
    \phi_{i_{l}}^{l, i_{l+1}}(s) &= W^{(l)}_{i_{l}, i_{l+1}} \sigma (s) \hspace{0.9cm}  \text{ for }i_{l} > 1.
\end{align*}
This function $f$ is a KAN and by construction $f(x) = g(x)$ for all $ x \in \mathbb{R}^n$.
\end{proof}

\cite{arora2016understanding} proves that every piecewise linear function with finitely many pieces can be represented exactly by a ReLU network of finite depth and width.
A corollary of Theorem \ref{thm:relu-to-kan} is that any piecewise linear function with finitely many pieces can also be represented exactly as a piecewise linear KAN.
Moreover, it is possible to do so with a KAN network of the same depth as the ReLU network.
Upper bounds on this depth are given by \cite{arora2016understanding}. 

\subsection{For Every KAN there's a ReLU}

\begin{figure}
\centering
    \begin{center}
    \includegraphics[width=0.33\textwidth]{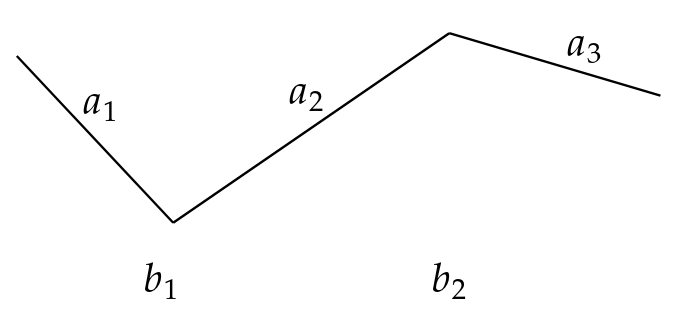}
    \end{center}
  \caption{Example of a piecewise linear activation function.}
  \label{fig:breakpoints}
\hspace{1cm}
    \begin{center}
    \includegraphics[width=0.4\textwidth]{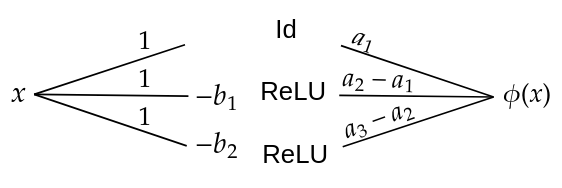}
    \end{center}
  \caption{Network implementing the activation function. \vspace{0.37cm}}
  \label{fig:small-network}
\end{figure}

In this section we show that we can express any KAN as a ReLU network.
We begin with an example of a piecewise linear activation function $\phi$ with two breakpoints $b_1, b_2$ and three different inclinations $a_1, a_2, a_3$, as in Figure \ref{fig:breakpoints}.
If we assume the first segment passes through the origin, then we can write this activation function as
\begin{align*}
   g(x) := a_1 \cdot x & + (a_2 -a_1) \cdot \relu(x - b_1) \\
                       & + (a_3 -a_2) \cdot \relu(x-b_2). 
\end{align*}
We will now check that $\phi(x) = g(x)$ in all three segments:
\begin{itemize}
    \item For $x < b_1$ we get $g(x) = a_1 \cdot x$.
    \item For $b_1 < x < b_2$ we get $g(x) = a_1 \cdot x + (a_2 -a_1) \cdot (x - b_1)$, the derivative of $g$ here is $a_2$ and the value of $g$ at point $b_1$ is $g(b_1) = a_1 \cdot b_1 = \phi(b_1)$.
    \item For $x > b_2$ we get $g(x) = a_1 \cdot x + (a_2 -a_1) \cdot (x - b_1) + (a_3 -a_2) \cdot(x-b_2)$,
    the derivative here is $a_3 $ and the value of $g$ at point $b_2$ is
    $g(b_2) = a_1 \cdot b_2 + (a_2 -a_1) \cdot (b_2 - b_1) = \phi(b_2)$.
\end{itemize}

We will now prove a lemma about a single piecewise linear activation function, a lemma about a single KAN layer, and finally the general theorem about KANs.

\begin{lemma}
    \label{lemma:activation-function-to-relu}
    Let $\phi \colon \bR \to \bR$ be a piecewise linear function with a finite number $n$ of segments.
    Then there exist a $n$-by-1 matrix $W^{(1)}$, a 1-by-$n$ matrix $W^{(2)}$, a bias vector $B^{(1)} \in \bR^n$, and a bias $B^{(2)} \in b$
    such that for all $x \in \bR$
    \begin{equation*}
        \phi(x) = W^{(2)} \relu(W^{(1)} x + B^{(1)}) + B^{(2)};
    \end{equation*}
    in other words, we can write $\phi$ as a feedforward network with one hidden layer with $n$ neurons.
\end{lemma}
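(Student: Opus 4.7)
The plan is to generalize the explicit three-segment construction exhibited just above the lemma. Write $\phi$ in terms of its $n-1$ breakpoints $b_1 < b_2 < \cdots < b_{n-1}$ and the slopes $a_1, a_2, \ldots, a_n$ on its $n$ linear pieces, together with the value $\phi(b_1)$ (which pins down the constant). The example expresses $\phi$ as a linear term $a_1 x$ plus one $\ReLU$ per breakpoint, but the lemma requires all $n$ neurons to be $\ReLU$s. The fix is to replace the linear term by a single reversed $\ReLU$ of the form $-a_1\,\ReLU(b_1 - x)$, which contributes slope $a_1$ on the leftmost piece and vanishes to the right of $b_1$; the remaining $n-1$ neurons can then be standard $\ReLU(x - b_{k-1})$ terms, one per breakpoint.

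Concretely, I propose
$W^{(1)} = (-1,\,1,\,1,\,\ldots,\,1)^\top \in \bR^{n \times 1}$,
$B^{(1)} = (b_1,\,-b_1,\,-b_2,\,\ldots,\,-b_{n-1})^\top \in \bR^{n}$,
$W^{(2)} = (-a_1,\,a_2,\,a_3 - a_2,\,a_4 - a_3,\,\ldots,\,a_n - a_{n-1}) \in \bR^{1 \times n}$,
and $B^{(2)} = \phi(b_1)$. Verification proceeds by checking that the resulting function $g(x) = W^{(2)}\ReLU(W^{(1)} x + B^{(1)}) + B^{(2)}$ agrees with $\phi$ on each of the $n$ pieces. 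On the leftmost piece $x < b_1$, only the first (reversed) neuron is active, yielding slope $a_1$ and value $\phi(b_1) + a_1(x - b_1)$, matching $\phi$. On piece $k \geq 2$ (where $b_{k-1} < x < b_k$), the first neuron is inactive and exactly the neurons corresponding to $b_1, \ldots, b_{k-1}$ are active; the slope of $g$ there is the telescoping sum $a_2 + \sum_{j=3}^k (a_j - a_{j-1}) = a_k$, as required. Continuity at $b_1$ gives $g(b_1) = B^{(2)} = \phi(b_1)$, and since $g$ and $\phi$ are both continuous piecewise linear functions with the same breakpoints, matching slopes on each piece, and the same value at $b_1$, they coincide on all of $\bR$.

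The only potentially subtle point is the asymmetric shape of $W^{(2)}$: its second entry is $a_2$ rather than the pattern-suggested $a_2 - a_1$. This is precisely because the leftmost slope $a_1$ is absorbed entirely into the reversed $\ReLU$, so the jump contributed by $\ReLU(x-b_1)$ only needs to move the slope from $0$ (the contribution of the reversed $\ReLU$ on the right of $b_1$) up to $a_2$, rather than from $a_1$ to $a_2$ as in the example. I do not expect any genuine obstacle beyond this careful bookkeeping around the first segment.
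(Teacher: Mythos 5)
Your construction is correct, and it follows the same basic idea as the paper's proof: a telescoping sum of one $\relu$ per breakpoint, with second-layer weights given by the slope differences $a_{k}-a_{k-1}$, so that on each piece the active neurons sum to the correct slope and continuity plus the value at one point pins down all the intercepts. The genuine difference is your treatment of the leftmost, unbounded segment, and here your version is actually the more careful one. The paper realizes the first segment's contribution as $a_1\relu(x)$ (first neuron $\relu(x)$ with weight $a_1$, bias $0$), which equals $a_1 x$ only for $x\geq 0$; for $x<0$ it vanishes, so the paper's network is constant on $(-\infty,\min(0,b_1))$ and introduces a spurious breakpoint at the origin whenever $a_1\neq 0$. (The worked example preceding the lemma avoids this by using a bare linear term $a_1\cdot x$, but that term is not a $\relu$ neuron and so does not fit the form $W^{(2)}\relu(W^{(1)}x+B^{(1)})+B^{(2)}$ demanded by the lemma.) Your reversed neuron $-a_1\relu(b_1-x)$, anchored at $B^{(2)}=\phi(b_1)$, produces slope $a_1$ on all of $(-\infty,b_1)$ and vanishes to the right, which is exactly what is needed; the compensating change of the second weight from $a_2-a_1$ to $a_2$ is accounted for correctly. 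One small shared caveat: both arguments implicitly assume $\phi$ is continuous (as it must be for the conclusion to hold, since a one-hidden-layer $\relu$ network is continuous); it would be worth stating this hypothesis explicitly.
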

\begin{proof}
    Let $b_1, \mathellipsis, b_{n-1} \in \bR$ be the breakpoints of the piecewise linear map $\phi$,
    and let $a_i$ for $1 < i < n$ denote the slope of $\phi$ on the interval $[b_{i-1}, b_{i}) \subset \bR$;
    by $a_1$ we denote the slope on $(-\infty, b_1)$, and by $a_{n}$ we denote the slope on $[b_{n-1}, \infty)$.
    Let $c \in \bR$ be the $y$-intercept of the first segment, i.e., $\phi(x) = a_0 x + c$ for $x \in (-\infty, b_1)$.

    Then we can take
    \begin{align*}        
        W^{(1)} &= (a_1, a_2 - a_1, a_3 - a_2, \mathellipsis, a_{n} - a_{n-1})^T \\
        W^{(2)} &= (1, 1, \mathellipsis, 1) \\
        B^{(1)} &= (0, -b_1, -b_2, \mathellipsis, -b_{n-1}) \\
        B^{(2)} &= c
    \end{align*}
    A simple calculation shows that this is correct.
    Indeed, $W^{(1)} x + B^{(1)}$ is equal to
    $(x, x - x_1, x - x_2, \mathellipsis, x - x_{n-1})$,
    so for $1 \leq i < n - 1$ and $x \in [x_i, x_{i+1})$,
    we see that $\relu(W^{(1)} x + B^{(1)})$ is equal to
    $(x, x - x_1, \mathellipsis, x - x_i, 0, \mathellipsis, 0)$.
    Multiplying this by $W^{(2)}$, we get
    \begin{equation*}
        a_1 x + (a_2 - a_1) (x - b_1) + \mathellipsis + (a_{i+1} - a_{i}) (x - b_i)
    \end{equation*}
    which has a linear coefficient for $x$ equal to $a_{i+1}$, as expected.
    Similarly, we can show that the slopes on the first and last segment are correct.
    The coefficient $B^{(2)}$ then ensures the right intercept on the first segment.
    It follows that the other intercepts are also correct,
    because the function is continuous and has the correct derivative everywhere.
\end{proof}

This demonstrates that we can convert a single activation function in a piecewise linear KAN to a ReLU network.
We use this to convert a single piecewise linear KAN layer.

\begin{lemma}
\label{lem:single-kanlyaer-to-relu-network}
    Let $\kanlayer = \{\phi_{p}^q\}_{p=1, \mathellipsis, \dimnin}^{q = 1, \mathellipsis, \dimnout} \colon \bR^\dimnin \to \bR^\dimnout$
    be a KAN layer with piecewise linear activation functions $\phi_{q,p}$ with finite numbers of segments.
    Then there exist matrices $W^{(1)}, W^{(2)}$ and bias vectors $B^{(1)}, B^{(2)}$
    such that for all $x \in \bR^\dimnin$
    \begin{equation*}
        \kanlayer(x) = W^{(2)} \relu(W^{(1)} x + B^{(1)}) + B^{(2)};
    \end{equation*}
    in other words, we can write $\kanlayer$ as a feedforward network with one hidden layer.
\end{lemma}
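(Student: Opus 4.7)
The plan is to assemble the desired one-hidden-layer ReLU representation of $\kanlayer$ by running the construction of \cref{lemma:activation-function-to-relu} in parallel on each of the $\dimnin \cdot \dimnout$ scalar activation functions $\phi_p^q$ and then stitching the resulting single-neuron ReLU subnetworks together into block-sparse matrices. Concretely, I first invoke \cref{lemma:activation-function-to-relu} to obtain, for each pair $(p, q)$, matrices $W^{(1)}_{p,q} \in \bR^{n_{p,q} \times 1}$ and $W^{(2)}_{p,q} \in \bR^{1 \times n_{p,q}}$ and biases $B^{(1)}_{p,q} \in \bR^{n_{p,q}}$, $B^{(2)}_{p,q} \in \bR$ such that $\phi_p^q(t) = W^{(2)}_{p,q} \relu(W^{(1)}_{p,q} t + B^{(1)}_{p,q}) + B^{(2)}_{p,q}$, where $n_{p,q}$ is the number of linear pieces of $\phi_p^q$.

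Next I would set the total hidden width to $N = \sum_{p,q} n_{p,q}$ and build a block-sparse first-layer matrix $W^{(1)} \in \bR^{N \times \dimnin}$ whose block of $n_{p,q}$ rows corresponding to the $(p,q)$ activation has $W^{(1)}_{p,q}$ placed in column $p$ and zeros elsewhere; the first bias $B^{(1)} \in \bR^N$ is the concatenation of all $B^{(1)}_{p,q}$. Then the ReLU layer produces, in the $(p,q)$-block, precisely the vector $\relu(W^{(1)}_{p,q} x_p + B^{(1)}_{p,q})$, because of how $W^{(1)}$ selects $x_p$.

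For the output layer I would construct $W^{(2)} \in \bR^{\dimnout \times N}$ so that its $q$-th row has $W^{(2)}_{p,q}$ in the block of columns belonging to $(p,q)$ for each $p$, and zeros elsewhere; and I would set $B^{(2)} \in \bR^{\dimnout}$ with $B^{(2)}_q = \sum_{p=1}^{\dimnin} B^{(2)}_{p,q}$. Then the $q$-th component of $W^{(2)} \relu(W^{(1)} x + B^{(1)}) + B^{(2)}$ unfolds exactly to $\sum_{p=1}^{\dimnin} \bigl(W^{(2)}_{p,q} \relu(W^{(1)}_{p,q} x_p + B^{(1)}_{p,q}) + B^{(2)}_{p,q}\bigr) = \sum_{p=1}^{\dimnin} \phi_p^q(x_p)$, which is the $q$-th coordinate of $\kanlayer(x)$ by definition.

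There is no substantive analytic difficulty here since \cref{lemma:activation-function-to-relu} already does all the real work; the only care required is bookkeeping to make sure that each $(p,q)$-block reads from the correct input coordinate $x_p$ and writes to the correct output coordinate $q$, and that the scalar biases $B^{(2)}_{p,q}$ are aggregated into a single per-output bias. The mild obstacle, if any, is just being explicit about the row/column partitioning of $W^{(1)}$ and $W^{(2)}$; once that is fixed, correctness follows from linearity of the second layer combined with the per-activation identity from \cref{lemma:activation-function-to-relu}.
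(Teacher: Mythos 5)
Your proposal is correct and takes essentially the same approach as the paper: invoke \cref{lemma:activation-function-to-relu} on each activation function and assemble the resulting subnetworks into block-sparse weight matrices, with the per-activation scalar biases summed into the output bias. The only difference is presentational — the paper splits the bookkeeping into two stages (first $\dimnout = 1$, then stacking over $q$), while you index directly over all pairs $(p,q)$ in one pass; the resulting network is the same up to a permutation of hidden neurons.
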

\begin{proof}    
    First assume $\dimnout = 1$;
    the function we need to represent is then $\kanlayer(x) = \sum_{p=1}^\dimnin \phi_{p}^1(x_p)$.
    By \cref{lemma:activation-function-to-relu},
    we can write each $\phi_{p}^1$ as
    \begin{equation*}
        \phi_{p}^1(x_p) = W^{(2)}_{ p} \relu( W^{(1)}_{ p} x_p + B^{(1)}_{ p} ) + B^{(2)}_{ p}.
    \end{equation*}
    We can combine these results together.
    This process is illustrated in \cref{fig:concatenated-vectors} 
    and the equation below outlines the matrix calculus:
    \begin{equation*}
    \begin{split}
    \kanlayer(x) 
    =& \sum_{p=1}^\dimnin \phi_{1,p}(x_p) \\
     =& \sum_{p=1}^\dimnin W^{(2)}_{ p} \relu( W^{(1)}_{ p} x_p + B^{(1)}_{ p} ) + B^{(2)}_{ p} \\
    =& \sum_{p=1}^{\dimnin} B^{(2)}_{ p} + \begin{pmatrix} 
            W^{(2)}_{1} & W^{(2)}_{2} & \cdots & W^{(2)}_{ \dimnin}        
            \end{pmatrix} \cdot \relu\\
            & \left( \mspace{-5mu}
            \begin{pmatrix}
            W^{(1)}_{1} & 0 & \mydots & 0 \\
            0 & W^{(1)}_{2} & \mydots & 0 \\
            \vdots & \vdots & \ddots & \vdots \\
            0 & 0 & \mydots & W^{(1)}_{ \dimnin}
            \end{pmatrix}
            \mspace{-5mu}
            \begin{pmatrix}
            x_1 \\
            x_2 \\
            \vdots \\
            x_\dimnin            
            \end{pmatrix}
            \mspace{-5mu}
            +
            \mspace{-5mu}
            \begin{pmatrix}
            B^{(1)}_{1}  \\
            B^{(1)}_{2}  \\
            \vdots \\
            B^{(1)}_{ \dimnin}
            \end{pmatrix}     
            \mspace{-5mu}
            \right)        
    \end{split}
    \end{equation*}
    This shows that we can also write 
    \[ \kanlayer(x) = W^{(2)} \relu(W^{(1)} x + B^{(1)}) + B^{(2)}.\]

    Now consider again the general case of $\dimnout \geq 1$.
    By what we have just proven, for every $q = 1, \mathellipsis, \dimnout$,
    we can write \begin{equation*}
        \sum_{p=1}^\dimnin \phi_{p}^q(x_p) = W^{(2)}_{ q} \relu(W^{(1)}_{q} x + B^{(1)}_{ q})
    \end{equation*}
    for some matrices $W^{(1)}_{q}, W^{(2)}_{q}, B^{(1)}_{q}, B^{(2)}_{q}$.
    We can again do all these computations in parallel,
    as illustrated on the left-hand side of \cref{fig:big-network};
    more rigorously, we have the following block matrix computation:
    \begin{equation*}
    \begin{split}
        \kanlayer(x) 
        &=  \begin{pmatrix}
            \sum_{p=1}^{\dimnin} \phi_{1,p}(x_p) \\
            \sum_{p=1}^{\dimnin} \phi_{2,p}(x_p) \\
            \vdots \\
            \sum_{p=1}^{\dimnin} \phi_{\dimnout,p}(x_p)          
        \end{pmatrix} \\
        & = \begin{pmatrix}
            W^{(2)}_{ 1} \relu (W^{(1)}_{1} x + B^{(1)}_{1}) + B^{(2)}_{1} \\
            W^{(2)}_{ 2} \relu (W^{(1)}_{2} x + B^{(1)}_{2}) + B^{(2)}_{2} \\
            \vdots \\
            W^{(2)}_{ \dimnout} \relu (W^{(1)}_{\dimnout} x + B^{(1)}_{\dimnout}) + B^{(2)}_{\dimnout}
        \end{pmatrix} \\
        &= \begin{pmatrix} 
            W^{(2)}_{1} & 0 & \cdots & 0 \\
            0 & W^{(2)}_{2} & \cdots & 0 \\
            \vdots & \vdots & \ddots & \vdots \\
            0 & 0 & \cdots & W^{(2)}_{ \dimnout}
        \end{pmatrix} \cdot \relu\\
        & \quad \left(
            \begin{pmatrix}
            W^{(1)}_{1} \\
            W^{(1)}_{2} \\
            \vdots \\
            W^{(1)}_{ \dimnout}
            \end{pmatrix}
            x
            +
            \begin{pmatrix}
            B^{(1)}_{1}  \\
            B^{(1)}_{2}  \\
            \vdots \\
            B^{(1)}_{ \dimnout}
            \end{pmatrix}
        \right)
        + 
        \begin{pmatrix} 
            B^{(2)}_{1}  \\
            B^{(2)}_{2}  \\
            \vdots \\
            B^{(2)}_{ \dimnout}
        \end{pmatrix}                
    \end{split}
    \end{equation*} 
    so again we see that we can write 
    \[ \kanlayer(x) = W^{(2)} \relu(W^{(1)} x + B^{(1)}) + B^{(2)} . \]

\end{proof}

    \begin{figure}
    \begin{center}
    \includegraphics[width=0.4\textwidth]{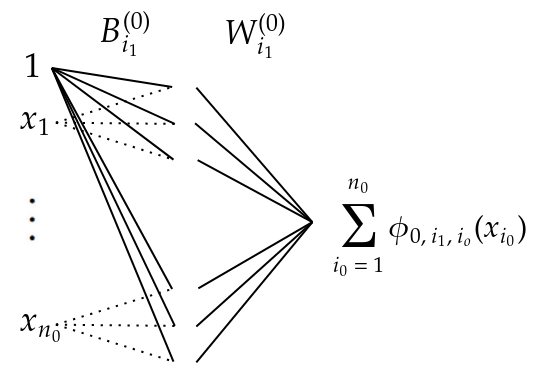}
    \end{center}
    \caption{Concatenating vectors $W^{(1)}_{i_1, i_{0}}$ and $B^{(1)}_{i_1, i_{0}}$ into vectors $W^{(1)}_{i_1}$ and $B^{(1)}_{i_1}$.}
    \label{fig:concatenated-vectors}
    \end{figure}

\begin{theorem}\label{thm:kan-to-relu}
    For every piecewise linear KAN $f:\mathbb{R}^n\rightarrow \mathbb{R}$ there exists a ReLU network $g:\mathbb{R}^n \rightarrow \mathbb{R}$ such that $f(x) = g(x)$ for all $ x \in \mathbb{R}^n$.
\end{theorem}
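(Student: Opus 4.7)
The plan is to apply \cref{lem:single-kanlyaer-to-relu-network} to each KAN layer and then compose the resulting single-hidden-layer ReLU networks into one multi-layer ReLU network, merging consecutive affine transformations as we go.

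More concretely, I would write $f = \kanlayer_{L-1} \circ \cdots \circ \kanlayer_0$ and apply \cref{lem:single-kanlyaer-to-relu-network} to each $\kanlayer_\ell$ (noting that composition of piecewise linear functions with finitely many segments is again piecewise linear with finitely many segments, so the hypothesis of the lemma is satisfied for every layer) to obtain matrices $W^{(1)}_\ell, W^{(2)}_\ell$ and bias vectors $B^{(1)}_\ell, B^{(2)}_\ell$ such that
\begin{equation*}
\kanlayer_\ell(y) = W^{(2)}_\ell \relu(W^{(1)}_\ell y + B^{(1)}_\ell) + B^{(2)}_\ell.
\end{equation*}
Substituting these expressions one into the next yields an alternating chain of affine maps and $\relu$ nonlinearities.

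The key observation is that between consecutive layers the post-affine of layer $\ell$ and the pre-affine of layer $\ell+1$ can be collapsed into a single affine map via
\begin{equation*}
W^{(1)}_{\ell+1}\bigl(W^{(2)}_\ell h + B^{(2)}_\ell\bigr) + B^{(1)}_{\ell+1} = \widetilde W_\ell\, h + \widetilde B_\ell,
\end{equation*}
with $\widetilde W_\ell = W^{(1)}_{\ell+1} W^{(2)}_\ell$ and $\widetilde B_\ell = W^{(1)}_{\ell+1} B^{(2)}_\ell + B^{(1)}_{\ell+1}$. After doing this for every $\ell = 0, \ldots, L-2$, what remains is precisely a ReLU feedforward network with $L$ hidden layers whose input affine is $(W^{(1)}_0, B^{(1)}_0)$, whose output affine is $(W^{(2)}_{L-1}, B^{(2)}_{L-1})$, and whose intermediate weights and biases are the $(\widetilde W_\ell, \widetilde B_\ell)$. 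Setting $g$ to be this network gives $f(x) = g(x)$ for all $x \in \mathbb{R}^n$ by construction.

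There is essentially no hard step: the theorem is a bookkeeping consequence of \cref{lem:single-kanlyaer-to-relu-network}. The only thing to be careful about is the merging of adjacent affine maps, which is what turns the concatenation of single-hidden-layer networks into a genuine ReLU feedforward network in standard form (rather than a network with ``extra'' linear layers between ReLU blocks). I would make this merging step explicit in the write-up so the reader sees that the result is a ReLU network in the usual sense.
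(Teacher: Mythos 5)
Your proposal is correct and is essentially identical to the paper's own proof: both apply \cref{lem:single-kanlyaer-to-relu-network} layerwise and then merge the trailing affine map of layer $\ell$ with the leading affine map of layer $\ell+1$ into combined weights $W^{(1)}_{\ell+1}W^{(2)}_\ell$ and combined bias $W^{(1)}_{\ell+1}B^{(2)}_\ell + B^{(1)}_{\ell+1}$. The only cosmetic difference is your remark about compositions of piecewise linear maps, which is not needed since each layer's activation functions are already assumed piecewise linear with finitely many segments.
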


\begin{proof}
    A KAN with $L$ layers is a composition $\kanlayer_{L-1} \circ \kanlayer_{L-2} \circ \mathellipsis \circ \kanlayer_0$ of $L$ KAN layers.
    By \cref{lem:single-kanlyaer-to-relu-network}, each KAN layer $\kanlayer_\ell$ can be written as
    \begin{equation*}
        \kanlayer_\ell(x^{(\ell)}) = W^{(\ell, 2)} \relu (W^{(\ell, 1)} x^{(\ell)} + B^{(\ell, 1)} ) + B^{(\ell, 1)}.
    \end{equation*}
    When we compose two layers $\kanlayer_{\ell + 1} \circ \kanlayer_\ell$,
    the last layer of the feedforward architecture of $\kanlayer_\ell$ is not followed by a non-linear activation function,
    so it can be combined with the first layer of $\kanlayer_{\ell + 1}$;
    this combination looks like this:
    \begin{equation*}
    \begin{split}        
        \phantom{=}& W^{(\ell + 1, 1)} \kanlayer_\ell(x^{(\ell)}) + B^{(\ell + 1, 1)} \\
        =& W^{(\ell +1 , 1)}
           \left(
                W^{(\ell, 2)} \relu (W^{(\ell, 1)} x^{(\ell)} + B^{(\ell, 1)} ) + B^{(\ell, 2)}
           \right) \\
         & + B^{(\ell + 1, 1)} \\
        =& \underbrace{W^{(\ell +1 , 1)} W^{(\ell, 2)}}_{\text{combined weights}} \relu 
            \left(
                W^{(\ell, 1)} x^{(\ell)}
            +  B^{(\ell, 1)} \right) \\
                &+ \underbrace{W^{(\ell + 1, 1)} B^{(\ell, 2)}
                + B^{(\ell + 1, 1)}}_{\text{combined bias}}.
    \end{split}
    \end{equation*}
\end{proof}

    \begin{figure}
    \begin{center}
    \includegraphics[width=0.5\textwidth]{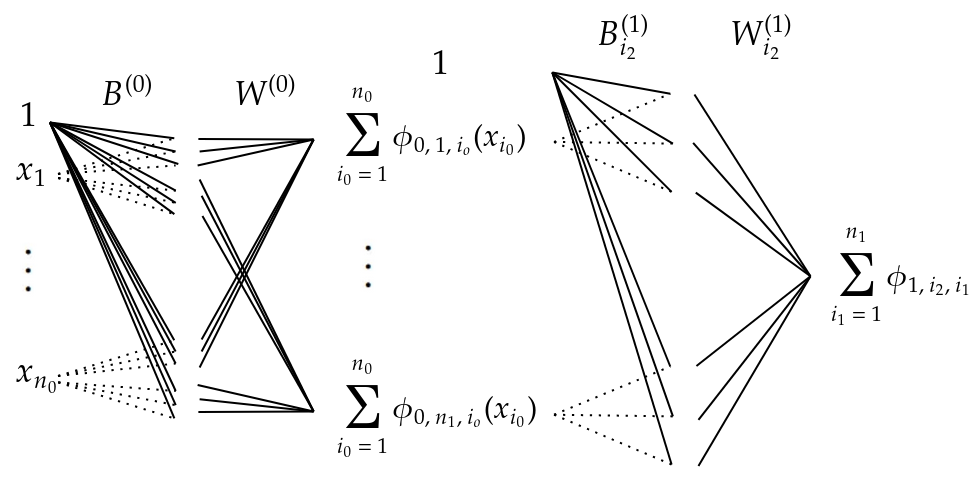}
    \end{center}
    \caption{Three hidden layer network implementing a KAN of depth two.}
    \label{fig:big-network}
    \end{figure}

\subsection{Class Embeddings}
\label{sec:class-embedding}

Let $\textbf{KAN}(L,n,k)$ be the functional class of KANs with $L$ layers, width $n = \text{max}_{i = 1,...,L}(n_i)$, and activation functions with at most $k+1$ segments.
Similarly, let $\textbf{ReLU}(L,n)$ denote the class of ReLU networks with width $n$ and $L$ layers. 



\begin{theorem}
Using the notation from above,
the constructions in
\cref{thm:kan-to-relu}
and
\cref{thm:relu-to-kan}
define the following embeddings:
\begin{equation*}
    \begin{split}
    \textbf{KAN}(L,n,k) 
    &\subseteq \textbf{ReLU}(L+1,n^2 (k+1)) \\
    &\subseteq \textbf{KAN}(L+1,n^2 (k+1),1).
    \end{split}
\end{equation*}
    
\end{theorem}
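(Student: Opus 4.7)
The plan is to chain the two previous constructions end to end, with careful bookkeeping of depth, width, and segment count.

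For the first inclusion, take any $f \in \textbf{KAN}(L,n,k)$, written as a composition $\kanlayer_{L-1}\circ\cdots\circ\kanlayer_0$ in which every layer has input and output dimension at most $n$ and every activation function has at most $k+1$ segments. First, apply \cref{lem:single-kanlyaer-to-relu-network} to each $\kanlayer_\ell$ individually, turning it into a one-hidden-layer ReLU network whose hidden width is the total number of segments across its (at most $n\cdot n$) activation functions, and hence is bounded by $n^2(k+1)$. Next, compose these $L$ blocks and fuse each trailing affine map with the leading affine map of the next block, exactly as in the proof of \cref{thm:kan-to-relu}. This produces a single ReLU network with $L$ hidden layers, i.e., $L+1$ layers in the counting convention used for $\textbf{ReLU}(\cdot,\cdot)$, with all layer widths bounded by $n^2(k+1)$ (the input and output widths are at most $n$, hence at most $n^2(k+1)$). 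This places $f$ in $\textbf{ReLU}(L+1,n^2(k+1))$.

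For the second inclusion, feed the resulting ReLU network into the construction of \cref{thm:relu-to-kan}. That theorem produces a KAN with the same number of layers ($L+1$) and the same layer widths (bounded by $n^2(k+1)$); every activation function is either affine linear (one segment) or ReLU (two segments), and thus has at most $1+1$ segments. Hence the resulting KAN lies in $\textbf{KAN}(L+1,n^2(k+1),1)$, as required.

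The main obstacle, such as it is, will be purely notational: one must verify that the layer-counting conventions of $\textbf{KAN}$ and $\textbf{ReLU}$ line up, so that $L$ KAN layers produce a ReLU network with $L$ hidden layers $= L+1$ total layers. Under this convention, each KAN layer becomes a two-layer feedforward block with one intermediate ReLU; composing $L$ such blocks and fusing the $L-1$ adjacent affine-linear pairs leaves exactly $L+1$ affine maps separated by $L$ ReLU activations, matching the $(L+1)$-layer count. One should also check that no intermediate dimension exceeds the hidden-layer width bound $n^2(k+1)$, which is immediate since the input, output, and block-boundary dimensions are all at most $n$.
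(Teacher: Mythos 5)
Your proposal is correct and follows essentially the same route as the paper: the first inclusion comes from the KAN-to-ReLU construction with the width count $n^2(k+1)$ (each of the at most $n^2$ activation functions contributing at most $k+1$ hidden neurons via \cref{lemma:activation-function-to-relu}), and the second from the ReLU-to-KAN construction, whose activation functions have at most two segments. Your extra care with the layer-counting conventions is a welcome addition but does not change the argument.
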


\begin{proof}

Our KAN-to-ReLU conversion in \cref{thm:kan-to-relu}
converts a KAN with $L$ layers into a ReLU network with $L + 1$
layers (where, as is our convention, the last layer does not
include a ReLU, but is simply an affine linear map).
In the construction, each activation function with $k + 1$ segments
needs $k + 1$ hidden neurons to be converted into a feed-forward architecture
(\cref{lemma:activation-function-to-relu}).
Since every KAN layer has $n^2$ activation functions,
this means we need a width of $n^2(k+1)$.
This establishes the first embedding
\begin{equation*}
    \kanspace(L, n, k)
    \subseteq
    \reluspace(L + 1, n^2(k+1)).
\end{equation*}

Conversely,
the ReLU-to-KAN conversion in \cref{thm:relu-to-kan}
keeps the number of layers constant.
The equations at the end of the proof of
\cref{thm:relu-to-kan} that define the activation functions
$\phi_{i_l}^{l, i_{l+1}}$,
show that we use activation functions that consist of at most
$2$ segments, and the width also remains unchanged.
This establishes the second embedding
\begin{equation*}
    \reluspace(L + 1, n^2(k+1))
    \subseteq
    \kanspace(L + 1, n^2(k+1), 1)
\end{equation*}
concluding the proof.

\end{proof}

\section{CONVERSION EFFICIENCY}
\label{sec:conversion-efficiency}

Let $\#:\mathcal{F} \rightarrow \mathbb{N}$ be a parameter counting function on a parameterised functional class $\mathcal{F}$. 
In the following propositions, $f, g$ represent KANs and ReLU networks. 
In contrast, $\hat{f},\hat{g}$ represent the KANs and ReLU networks that have been converted from ReLU (as in Theorem \ref{thm:relu-to-kan}) and KAN networks (as in Theorem \ref{thm:kan-to-relu}) respectively. 

\subsection{Conversion from ReLU to KAN}

\begin{lemma}
Let $g$ be a ReLU network $g \colon \mathbb{R}^n \rightarrow \mathbb{R}$ with $L \in \mathbb{N}$ hidden layers and $\mathbf{N} = [n_1, n_2, \ldots, n_L]$ neurons.
Then the network has at most 
$n\times n_1 + n_L + \sum_{i=1}^{L-1} n_i \times n_{i+1}$ parameters in the weight matrices and at most $1+\sum_{i=1}^{L} n_i$ parameters in the bias vectors, for a total of 
\[ \#(g) = 1+ n\times n_1 + 2\cdot n_L + \sum_{i=1}^{L-1} (n_i \times n_{i+1} + n_{i}).\]
\end{lemma}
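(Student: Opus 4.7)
The plan is to directly enumerate the weight matrices and bias vectors of the standard feedforward architecture and then algebraically rearrange the sum into the stated form. Following the notation of the proof of \cref{thm:relu-to-kan}, I would write $g$ as a composition of affine maps and ReLU nonlinearities with weight matrices $W^{(0)}, W^{(1)}, \ldots, W^{(L)}$ and bias vectors $B^{(0)}, B^{(1)}, \ldots, B^{(L)}$, where $W^{(0)} \in \mathbb{R}^{n \times n_1}$, $W^{(i)} \in \mathbb{R}^{n_i \times n_{i+1}}$ for $1 \le i \le L-1$, and $W^{(L)} \in \mathbb{R}^{n_L \times 1}$ since the output is one-dimensional.

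Counting the weights, $W^{(0)}$ contributes $n \cdot n_1$ parameters, the matrices $W^{(i)}$ for $1 \le i \le L-1$ contribute $\sum_{i=1}^{L-1} n_i n_{i+1}$ parameters, and $W^{(L)}$ contributes $n_L$ parameters, for a total weight budget of $n n_1 + n_L + \sum_{i=1}^{L-1} n_i n_{i+1}$. Analogously, the bias vector at the $i$-th hidden layer contributes $n_i$ parameters and the output bias contributes one parameter, giving a total of $1 + \sum_{i=1}^{L} n_i$ bias parameters.

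The final step is purely algebraic: I would split off the $n_L$ contribution from the bias sum so that it pairs with the weight contribution $n_L$ to produce $2 n_L$, and absorb the remaining bias terms $\sum_{i=1}^{L-1} n_i$ into the product sum $\sum_{i=1}^{L-1} n_i n_{i+1}$ to yield $\sum_{i=1}^{L-1}(n_i n_{i+1} + n_i)$. Adding the $1$ from the output bias gives the stated formula.

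There is no serious obstacle here; the lemma is a bookkeeping exercise. The only subtlety worth flagging explicitly is the phrase \emph{at most} in the statement: a particular network may have some weights or biases that happen to be zero, so the expression is an upper bound on the parameter count which is attained generically.
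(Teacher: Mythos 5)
Your proof is correct and is exactly the bookkeeping argument the authors intend (the paper in fact states this lemma without any proof at all); your enumeration of the weight matrices and bias vectors, the algebraic regrouping that pairs the output weight $n_L$ with the output-layer bias count to give $2n_L$, and the remark about ``at most'' accounting for weights that happen to vanish are all consistent with the stated formula.
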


\begin{proposition}
    Let $g$ be a ReLU network $g \colon \mathbb{R}^n \rightarrow \mathbb{R}$ with $L \in \mathbb{N}$ hidden layers and $\mathbf{N} = [n_1, n_2, \ldots, n_L]$ neurons.
    Let $\hat{f}$ be the KAN as constructed in the proof of Theorem \ref{thm:relu-to-kan}.
    Then,
    \[ \#(\hat f) = \#(g) + 4 \cdot (n_1+ n_2+ \ldots + n_L +1).\]
\end{proposition}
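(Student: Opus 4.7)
The plan is to directly enumerate the parameters of $\hat{f}$ produced by the construction in the proof of Theorem~\ref{thm:relu-to-kan}, and compare this count with the expression for $\#(g)$ from the preceding lemma. Setting $n_0 := n$ and $n_{L+1} := 1$ for convenience, the KAN $\hat{f}$ has $L+1$ KAN layers, and layer $\ell$ contains $n_\ell \cdot n_{\ell+1}$ activation functions of the form $\phi_{i_\ell}^{\ell, i_{\ell+1}}$.

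First, I would classify each activation by whether it carries a bias. By the construction, for each fixed output index $i_{\ell+1}$ only the activation with $i_\ell = 1$ absorbs the bias $B^{(\ell)}_{i_{\ell+1}}$, while the remaining $n_\ell - 1$ activations in the same column have no intercept term. So layer $\ell$ contributes $n_{\ell+1}$ bias-carrying activations and $(n_\ell - 1) n_{\ell+1}$ bias-free activations, and summing over $\ell = 0, 1, \ldots, L$ gives $\sum_{\ell=0}^{L} n_{\ell+1} = n_1 + n_2 + \cdots + n_L + 1$ bias-carrying activations in total, which matches the number of biases of $g$.

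Next, I would match parameters between $\hat{f}$ and $g$. Each weight $W^{(\ell)}_{i_\ell, i_{\ell+1}}$ of $g$ is used exactly once as the slope coefficient of an activation of $\hat{f}$, and each bias $B^{(\ell)}_{i_{\ell+1}}$ appears exactly once as the intercept of the corresponding bias-carrying activation; together these account for all of $\#(g)$. The additional parameters in $\hat{f}$ come from storing the piecewise-linear structure of each bias-carrying activation (such as the breakpoint location, the left-hand slope, and associated auxiliary data beyond the slope and intercept themselves), contributing $4$ extra parameters per bias-carrying activation. Summing, the overhead is exactly $4(n_1 + n_2 + \cdots + n_L + 1)$, which combined with the lemma's formula for $\#(g)$ yields the claim.

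The main subtlety is pinning down why the overhead per bias-carrying activation is exactly $4$: this depends on the parameterisation convention adopted for piecewise-linear KAN activations, under which bias-free activations can be stored as pure scalings of a base activation while bias-carrying activations must separately encode their intercept together with the PL segmentation data. Once the convention is fixed, the remaining step is routine algebra using $n_0 = n$, $n_{L+1} = 1$, and the closed-form expression $\#(g) = 1 + n n_1 + 2 n_L + \sum_{i=1}^{L-1}(n_i n_{i+1} + n_i)$ from the lemma.
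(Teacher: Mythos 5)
Your overall strategy --- enumerate the parameters of $\hat f$ directly, match each weight and bias of $g$ to a slope or intercept of exactly one activation function, and then count the overhead --- is the same as the paper's, and your bookkeeping of the count $n_1+\cdots+n_L+1$ (as $\sum_{\ell=0}^{L} n_{\ell+1}$ with $n_{L+1}=1$) is correct and more explicit than what the paper writes. The gap is in the step you yourself flag as ``the main subtlety'': you never justify why the overhead is exactly $4$ per unit, and you attach the overhead to the wrong objects. You attribute the extra parameters to the \emph{bias-carrying} activations and posit that the bias-free ones ``can be stored as pure scalings of a base activation.'' But in the construction of Theorem~\ref{thm:relu-to-kan}, every activation $\phi_{i_\ell}^{\ell,i_{\ell+1}}(s)=W^{(\ell)}_{i_\ell,i_{\ell+1}}\sigma(s)+B^{(\ell)}_{i_{\ell+1}}$ (or the same without the bias) with $\ell\geq 1$ wraps a ReLU and therefore carries the same piecewise-linear segmentation data (a breakpoint at $0$ and a left-hand slope $0$) whether or not it carries a bias; there are $\sum_{\ell\geq 1} n_\ell n_{\ell+1}$ such activations, which is quadratic in the widths, so your attribution only yields a linear overhead because of the extra storage convention you invent --- a convention that is neither in the paper nor forced by the construction. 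Even granting it, the ``exactly $4$'' is left as ``associated auxiliary data,'' i.e., asserted rather than derived.

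The paper's (admittedly terse) proof uses a different accounting: the overhead is four parameters \emph{per application of ReLU}, with the number of applications taken to be $n_1+\cdots+n_L+1$, and the ``four'' comes from the convention used again in the KAN parameter-count proposition of Section~5.2 --- a $k$-segment piecewise linear activation costs $2k$ scalars ($k$ slopes, $k-1$ breakpoints, one initial intercept), so a ReLU, having $k=2$ segments, costs $4$. To repair your argument you should either adopt that convention and explain why the ReLU's segmentation data is charged once per neuron of $g$ rather than once per activation function of $\hat f$, or else state precisely the storage convention under which your per-unit charge of $4$ on exactly the bias-carrying activations is correct. As written, the multiplicand $4$ and its multiplicity are the content of the proposition, and neither is established.
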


\begin{proof}
    The construction in the proof of Theorem \ref{thm:relu-to-kan} uses
    the same number of parameters as are in $g_{\theta}$ and additionally requires four parameters per application of ReLU, of which there are $n_1+ n_2+ \ldots + n_L + 1$.
\end{proof}

This translation is efficient as it scales linearly with the number of neurons in the original ReLU architecture.

\subsection{Conversion from KAN to ReLU}

\begin{proposition}
    Let $f \colon \mathbb{R}^n \rightarrow \mathbb{R}^m$ be a $L$ layer KAN network as described in Equation \ref{eq:kan}. 
    If each $\phi$ has exactly $k$ segments, the total number of parameters is given by: 
    \[ \#(f) = 2k \sum_{l = 1}^{L+1} n_l n_{l-1}. \]
    \end{proposition}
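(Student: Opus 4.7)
The plan is a direct parameter count, with the main work being (a) determining how many parameters specify one piecewise linear activation function with $k$ segments, and (b) counting how many such activations appear in the network, layer by layer.

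First I would fix a canonical parameterisation of a piecewise linear $\phi \colon \mathbb{R} \to \mathbb{R}$ with exactly $k$ segments. Using the same description as in the proof of \cref{lemma:activation-function-to-relu}, such a $\phi$ is uniquely determined by its $k-1$ breakpoints $b_1, \ldots, b_{k-1}$, its $k$ slopes $a_1, \ldots, a_k$, and the single intercept $c$ of the first segment (all remaining intercepts being forced by continuity). That gives exactly $(k-1) + k + 1 = 2k$ real parameters per activation function. I would state this as a short preliminary observation so that the sum later is transparent.

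Next I would count the number of activation functions. By definition, the $\ell$-th KAN layer $\kanlayer_\ell$ with input dimension $n_{\ell-1}$ and output dimension $n_\ell$ is an $n_\ell$-by-$n_{\ell-1}$ matrix of univariate functions, hence contains $n_\ell \, n_{\ell-1}$ activation functions. Summing over the layers indexed as in the statement (so that the product $n_\ell n_{\ell-1}$ appears for $\ell = 1, \ldots, L+1$ according to the paper's convention in \cref{eq:kan}), the total number of activation functions is $\sum_{\ell=1}^{L+1} n_\ell n_{\ell-1}$. Multiplying by the $2k$ parameters per activation function gives
\[
    \#(f) = 2k \sum_{\ell=1}^{L+1} n_\ell n_{\ell-1},
\]
as claimed.

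There is essentially no obstacle here; the only thing one has to be careful about is the indexing convention, making sure that the range of $\ell$ in the sum matches the way the layers $\kanlayer_\ell$ and their dimensions $n_\ell$ were introduced around \cref{eq:kan}, and confirming that the count $2k$ per activation is tight (which follows because the breakpoints, slopes, and one intercept are independent degrees of freedom for a piecewise linear function with exactly $k$ pieces).
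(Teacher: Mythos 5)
Your proposal is correct and follows essentially the same argument as the paper: count $2k = k + (k-1) + 1$ parameters per activation function (slopes, breakpoints, and one intercept) and multiply by the $n_\ell n_{\ell-1}$ activation functions per layer, summed over layers. Your remark about watching the indexing is well taken, since the paper's own proof ends with $\sum_{l=1}^{L}$ while the statement uses $\sum_{l=1}^{L+1}$; you consistently match the statement.
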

\begin{proof}
For each $\phi_{i_{l-1}}^{l, i_l}, i_l = 1,...,n_l, i_{l-1} = 1,...,n_{l-1}, l = 1,...,L+1$ there are exactly $k$ scalar parameters $a^{l,i_l, i_{l-1}}_j$ representing the slopes of the univariate linear segments and $k-1$ scalar parameters representing breakpoints  $b^{i_l}_j$ and the initial bias for a total of $2k$ scalar parameters. 
Since there are $n_l n_{l-1}$ activation functions in layer $l$, the layer has $n_l n_{l-1}\cdot 2k$ parameters, for a total of 
$\#(f) = 2k \sum_{l = 1}^{L} n_l n_{l-1}$.
\end{proof}

\begin{proposition}
    Let $f \colon \mathbb{R}^n \rightarrow \mathbb{R}$ be a $L$ layer KAN network as described in Equation \ref{eq:kan}.
    Let each $\phi$ have exactly $k$ segments.
    Let $\hat g$ be the ReLU network as constructed in the proof of Theorem \ref{thm:kan-to-relu}, then 
    \[ \#(\hat g) = 2k  \sum^{L+1}_{l=1} n_l n_{l-1}.\]
\end{proposition}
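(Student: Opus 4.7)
The plan is to count the parameters of $\hat g$ by tracing how each parameter of the KAN $f$ enters the construction of Theorem \ref{thm:kan-to-relu}, and then summing across all layers. Since the preceding proposition establishes that $\#(f)$ takes the form $2k \sum_l n_l n_{l-1}$, the goal is to verify that the conversion preserves the count exactly.

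I would proceed in three steps. First, by Lemma \ref{lemma:activation-function-to-relu}, each piecewise linear activation with $k$ segments is expanded into a one-hidden-layer ReLU block whose weights and biases together carry exactly $2k$ free parameters: the $k$ slopes (appearing as slope differences in $W^{(1)}$), the $k-1$ breakpoints (appearing in $B^{(1)}$, with the first entry fixed at $0$), and the single initial bias (the scalar $B^{(2)}$). Next, by Lemma \ref{lem:single-kanlyaer-to-relu-network}, the $n_l n_{l-1}$ activation functions of the $l$-th KAN layer are assembled in parallel via block-diagonal stacking---a rearrangement of the per-activation parameters that creates nothing new and loses nothing---so the $l$-th layer contributes exactly $2k \cdot n_l n_{l-1}$ parameters to $\hat g$. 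Finally, composing across layers as in Theorem \ref{thm:kan-to-relu} may optionally merge a linear tail with a linear head via $W^{(\ell+1,1)} W^{(\ell,2)}$ and $W^{(\ell+1,1)} B^{(\ell,2)} + B^{(\ell+1,1)}$, but these merged objects are algebraic combinations of parameters already counted, so no new degree of freedom is introduced. Summing the per-layer contributions then yields
\[ \#(\hat g) = \sum_{l=1}^{L+1} 2k \, n_l n_{l-1} = 2k \sum_{l=1}^{L+1} n_l n_{l-1}. \]

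The main obstacle is not analytic but one of bookkeeping. One must treat the all-ones row $W^{(2)}$ as a structural constant of the construction rather than a trainable parameter, and one must count the output biases at the level of individual activation functions---so $n_l n_{l-1}$ scalars per layer, not the $n_l$ values obtained after summing over the inner index---so that the resulting tally for $\hat g$ is consistent with the $2k$-per-activation convention used for $\#(f)$ in the preceding proposition.
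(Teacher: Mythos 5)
Your proof is correct and follows essentially the same route as the paper's: a direct count of the non-zero parameters introduced by the layer-wise construction, summed over the $L+1$ layers. The only difference is organizational---you tally $2k$ parameters per activation function and multiply by the $n_l n_{l-1}$ activations in each layer, whereas the paper counts $k\, n_{l-1} n_l$ non-zero entries in each weight matrix $W^{(l)}$ and again in each bias vector $B^{(l)}$; your version is if anything more careful, since it makes explicit the conventions (the all-ones $W^{(2)}$ is structural, output biases are counted per activation before summing over the inner index) that the paper leaves implicit.
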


\begin{proof}
Each weight matrix $W^{(l)}$ has $n_l$ columns
and each column has length 
$\sum_{i_l=1}^{n_l}  \sum_{i_{l-1}=1}^{n_{l-1}} k_{\phi_{l, i_l, i_{l-1}}},$ and has at most 
$\sum_{i_{l-1}}^{n_{l-1}} k_{\phi_{l, i_l, i_{l-1}}}$ non-zero entries, where $k_{\phi_{l, i_l, i_{l-1}}}$ is the number of segments in the activation function $\phi_{i_{l-1}}^{l, i_l}$.

Given our assumption that every activation function has $k$ segments this means every column has at most 
$k \cdot n_{l-1}$ non-zero entries.
This means that every matrix $W^{(l)}$ has $k \cdot n_{l-1} \cdot n_l$ parameters. 

Analogously every bias vector $B^{(l)}$ has $k \cdot n_{l-1} \cdot n_l$ parameters. 

For a KAN with $L$ hidden layers and one output layer, we get
\begin{align*}
    \sum^{L+1}_{l=1} k \cdot n_{l-1} \cdot n_l + k \cdot n_{l-1} \cdot n_l =
    2k  \sum^{L+1}_{l=1} n_l n_{l-1}. 
\end{align*}
\end{proof}
This computation relies on the assumption that only non-zero values of $W^{(l)}$ are considered parameters. 
Computationally, this can be implemented with sparse matrices, in PyTorch. 
However, the width of the architecture is increased by a multiplicative factor: every layer has now $k\cdot n_l$ neurons. 

\section{POLYHEDRAL DECOMPOSITION OF KANS AND RELUS}\label{sec:polyhedral-decomposition}

In this section, we will relate the number of input regions a model differentiates between to the number of parameters needed to implement the model.

Let $\mathcal{R}: \mathcal{F} \rightarrow \mathbb{N}$ be the total number of regions in the polyhedral complex (the cardinality of $\Omega$). 
For both ReLU neworks and KANs we will consider how many parameters are needed per polytope.
In other words we will consider their representational power.

\subsection{Upper bound number of regions ReLU network}

We begin with ReLU networks, and we simply state the results of previous work.
The below proposition states that the input space of a ReLU network can be decomposed into a finite number of regions such that the network is linear in each region, and such that the network non-linearity occurs exactly on the region boundaries.

\begin{proposition}\label{prop:network-to-partition}[\citet{sudjianto2020unwrapping}]
For a ReLU network $\mathcal{N} \colon \mathbb{R}^n \rightarrow \mathbb{R}$ there is a finite partition $\Omega$ of $\mathbb{R}^n$ of cardinality $p := \# \Omega$ such that for each part $\omega \in \Omega$ there exists a piecewise linear function $f\colon \mathbb{R}^n \rightarrow \mathbb{R}^m$, and its restriction on $\omega$, denoted $f|_\omega$, is linear.
Each part is a  polytope, given by the intersection of a collection of half-spaces.
All of the half-spaces are induced by neurons. 
\end{proposition}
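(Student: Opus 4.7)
The plan is to enumerate the finitely many ``activation patterns'' of the network and show that each pattern determines a polytope on which $\mathcal{N}$ is affine. Label the neurons of $\mathcal{N}$ by pairs $(\ell, j)$ with $\ell$ the hidden layer index and $j$ the position within the layer. An \emph{activation pattern} is a function $a$ from the set of neurons to $\{0, 1\}$, where $1$ means ``active'' (preactivation $\geq 0$) and $0$ means ``inactive''. There are finitely many such patterns. For each pattern $a$ let $\omega_a \subseteq \mathbb{R}^n$ be the set of inputs whose forward pass realises $a$; the $\omega_a$ partition $\mathbb{R}^n$.

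First I would show by induction on $\ell$ that, on $\omega_a$, the preactivation of each layer-$\ell$ neuron is an affine function of $x$. For $\ell = 1$ this is immediate because the first layer is an affine map. For the inductive step, on $\omega_a$ each layer-$(\ell-1)$ neuron's output equals either $0$ or its preactivation, depending on the prescribed value of $a$; by the inductive hypothesis each such preactivation is affine in $x$ on $\omega_a$, so its ReLU output is too, and then the layer-$\ell$ preactivations, being affine combinations of these outputs, are affine in $x$ on $\omega_a$ as well.

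Consequently, on $\omega_a$ each constraint ``neuron $(\ell, j)$ is active'' (respectively inactive) can be written as $L^a_{\ell, j}(x) \geq 0$ (respectively $< 0$) for some affine function $L^a_{\ell, j}$, and $\omega_a$ is the intersection of the corresponding half-spaces: it is a polytope whose facets are each induced by a neuron. Since $\mathcal{N}$ is a composition of maps that are each affine on $\omega_a$, its restriction to $\omega_a$ is affine. Taking $\Omega = \{\omega_a : \omega_a \neq \emptyset\}$ yields the desired finite polytopal partition.

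The main obstacle is conceptual rather than computational: the affine form $L^a_{\ell, j}$ of a deep neuron's preactivation is \emph{pattern-dependent}, so the same neuron contributes different half-spaces in different regions. Globally, the zero-set of such a preactivation is a piecewise-linear hypersurface, not a hyperplane; only after restricting to a fixed pattern $a$ does it cut out a genuine half-space. Once this subtlety is made explicit, the inductive affineness claim and the finiteness of $\Omega$ follow directly.
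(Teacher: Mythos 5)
The paper does not actually prove this proposition; it is quoted from \citet{sudjianto2020unwrapping} with the remark that the authors ``simply state the results of previous work,'' so there is no in-paper argument to compare against. Your activation-pattern proof is the standard argument behind that citation and is essentially correct: the induction showing that all preactivations are affine on a fixed-pattern region, and the observation that the half-space contributed by a deep neuron is pattern-dependent, are exactly the right ingredients. The only step worth making explicit is the identity $\omega_a = \bigcap_{(\ell,j)} \{x : L^a_{\ell,j}(x) \geq 0 \text{ or } < 0\}$: since the affine forms $L^a_{\ell,j}$ are themselves defined using the pattern $a$, the inclusion of the half-space intersection into $\omega_a$ requires a short layer-by-layer induction (if $x$ satisfies all the sign constraints, then its forward pass realises $a$ layer by layer, so its true preactivations coincide with the $L^a_{\ell,j}$). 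Also note that the resulting parts are in general unbounded polyhedra with some facets open (because of the strict inequalities), which matches the loose use of ``polytope'' in the statement.
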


The below proposition gives an upper bound for the number of regions that the input space can be decomposed into.

\begin{proposition}\label{}[\cite{montufar2017}]
    Let $g$ be a ReLU network $g \colon \mathbb{R}^n \rightarrow \mathbb{R}$ with $L \in \mathbb{N}$ hidden layers and $\mathbf{N} = [n_1, n_2, \ldots, n_L]$ neurons.
    Then the number of linear regions is upper bounded by
    $\prod_{l=1}^{L} \sum_{j=0}^{d_l} {n_l\choose j},$
    where $d_l = \min\{ n, n_1, n_2, \ldots, n_l\}$,
    i.e.
    \[ \mathcal{R}(g) \leq \prod_{l=1}^{L} \sum_{j=0}^{d_l} {n_l\choose j}.\]
\end{proposition}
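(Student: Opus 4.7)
The plan is to apply Zaslavsky's theorem on hyperplane arrangements in a layer-by-layer fashion, which is the standard technique behind this Montufar-type bound. The key observation is that on each linear region $\omega$ of the partition induced by layers $1, \ldots, l-1$, the map $x \mapsto \chi^{(l)}$ is affine, so each of the $n_l$ neurons of layer $l$ contributes a hyperplane (or empty set) inside $\omega$ where its pre-activation vanishes, and these hyperplanes subdivide $\omega$.

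First I would set up the layer-wise recursion: by \cref{prop:network-to-partition}, the input space is partitioned into polytopes on which the network is affine, and I would express this partition as the common refinement of the per-layer arrangements. Then, on each polytope $\omega$ from the first $l-1$ layers, the $n_l$ new hyperplanes coming from layer $l$ cut $\omega$ into sub-regions, and Zaslavsky's theorem gives the bound $\sum_{j=0}^{d} \binom{n_l}{j}$ where $d$ is the ambient dimension of the affine span in which these hyperplanes live.

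The main obstacle is justifying that the relevant dimension is $d_l = \min\{n, n_1, \ldots, n_l\}$ rather than the nominal $n$. To handle this I would argue that the pre-activations of layer $l$, as functions of $x \in \omega$, factor through the image of the affine map
\[
x \mapsto W^{(l-1)} \ldots W^{(1)} x + (\text{affine offsets on } \omega),
\]
whose rank is at most $\min\{n, n_1, \ldots, n_{l-1}\}$; combining with the fact that layer $l$ has $n_l$ output coordinates yields effective dimension $d_l$. Consequently the $n_l$ hyperplanes in $\omega$ lie in an affine subspace of dimension at most $d_l$, and Zaslavsky gives at most $\sum_{j=0}^{d_l}\binom{n_l}{j}$ regions per $\omega$.

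Finally, I would conclude by multiplying the per-layer upper bounds: since every region of the refined partition is obtained by intersecting a region from layers $1, \ldots, l-1$ with a region produced by layer $l$ inside it, iterating the inequality over $l = 1, \ldots, L$ yields
\[
\mathcal{R}(g) \le \prod_{l=1}^{L} \sum_{j=0}^{d_l} \binom{n_l}{j},
\]
as claimed. The only nontrivial work is the effective-dimension argument; everything else is bookkeeping on top of a direct invocation of Zaslavsky's theorem and the result stated in \cite{montufar2017}.
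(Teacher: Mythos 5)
The paper offers no proof of this proposition at all: it is stated as a known result imported verbatim from \cite{montufar2017}, with the surrounding text explicitly saying the authors ``simply state the results of previous work.'' So there is nothing in the paper to compare against; what you have written is a (correct) sketch of the standard argument from the cited source. Your recursion is the right one: on each region $\omega$ of the partition induced by the first $l-1$ layers the pre-activations of layer $l$ are affine in $x$ and factor through an affine map of rank at most $\min\{n, n_1, \ldots, n_{l-1}\}$, so the $n_l$ zero sets pull back from a hyperplane arrangement in a space of that dimension, and Zaslavsky/Schl\"afli bounds the number of resulting sign-pattern regions by $\sum_{j=0}^{d}\binom{n_l}{j}$; multiplying over layers gives the product. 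Two small points you should make explicit if you write this out in full. First, the $d_l$ in the statement also includes $n_l$ in the minimum; when $n_l$ is the smallest term Zaslavsky is not what you want to invoke --- the per-region bound $\sum_{j=0}^{n_l}\binom{n_l}{j} = 2^{n_l}$ is just the trivial count of activation patterns of layer $l$, so the claimed bound holds by taking the minimum of the two estimates. Second, the hyperplanes do not literally ``lie in'' a $d_l$-dimensional affine subspace of the input space; rather, the arrangement on $\omega$ is the preimage of an arrangement of $n_l$ hyperplanes in a $d_l$-dimensional space under an affine map, and the number of nonempty regions of a pullback arrangement is at most the number of regions of the arrangement downstairs. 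With those clarifications your outline is a complete and correct route to the bound, and it supplies an argument the paper itself omits.
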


 \cite{serra2020empirical} find a tighter upper bound, by considering which combinations of turned off and turned on  ReLUs are possible (activation patterns).

\subsection{Upper bound number of regions of a KAN}

Now we calculate an upper bound for the number of linear regions of a KAN.

\begin{lemma}
  \label{lem:piecewise-linear}
  Let $f \colon \bR^n \to \bR^m$ be a piecewise linear map with $k$ segments,
  and let $g \colon \bR^m \to \bR^l$ be a piecewise linear map with $k'$ segments.
  Then the composition $g \circ f$ is a piecewise linear map with at most $k \cdot k'$ segments.
\end{lemma}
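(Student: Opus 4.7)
The plan is to take the polyhedral decomposition point of view introduced earlier in the paper: interpret ``$k$ segments'' as the cardinality of the polyhedral partition on which the piecewise linear map is affine, and then show that the natural common refinement of the two partitions (pulled back through $f$) has at most $k \cdot k'$ parts.

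Concretely, I would start by letting $\Omega_f = \{\omega_1, \mathellipsis, \omega_k\}$ be a polyhedral partition of $\bR^n$ such that $f|_{\omega_i}$ is affine, and $\Omega_g = \{\sigma_1, \mathellipsis, \sigma_{k'}\}$ be an analogous partition of $\bR^m$ for $g$. Then define
\[
  \tau_{ij} = \omega_i \cap f^{-1}(\sigma_j) \subseteq \bR^n,
  \qquad 1 \leq i \leq k,\; 1 \leq j \leq k'.
\]
The key observation is that on $\omega_i$ the map $f$ is affine, so $f^{-1}(\sigma_j) \cap \omega_i$ is the intersection of $\omega_i$ with the preimage of a polytope under an affine map, which is again a (possibly empty) polytope. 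Hence each $\tau_{ij}$ is polyhedral.

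Next I would verify two properties. First, the nonempty $\tau_{ij}$ form a partition of $\bR^n$: every $x \in \bR^n$ lies in a unique $\omega_i$, and $f(x) \in \bR^m$ lies in a unique $\sigma_j$, so $x$ is in exactly one $\tau_{ij}$. Second, $g \circ f$ is affine on each $\tau_{ij}$: on $\omega_i$, $f$ is the restriction of some affine map $A_i$, and on $\sigma_j$, $g$ is the restriction of some affine map $B_j$, so on $\tau_{ij}$ we have $(g \circ f)(x) = B_j(A_i(x))$, which is affine as the composition of affine maps.

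Counting the nonempty $\tau_{ij}$ gives at most $k \cdot k'$ regions, proving the bound. I do not expect a real obstacle here; the only thing to be careful about is making it explicit that ``segments'' is being used in the multivariate sense (number of parts in a polyhedral decomposition on which the map is affine), which is consistent with the paper's earlier discussion of polyhedral complexes for piecewise linear functions, and that empty intersections $\tau_{ij}$ are simply discarded so that the bound is an inequality rather than an equality.
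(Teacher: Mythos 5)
Your proposal is correct and follows essentially the same route as the paper: both intersect each segment $\omega_i$ of $f$ with the preimage $f^{-1}(\sigma_j)$ of each segment of $g$, observe that $g \circ f$ is affine on each such intersection, and count at most $k \cdot k'$ nonempty pieces. Your version adds the (welcome) detail that each piece is itself polyhedral because it is the intersection of a polytope with the affine preimage of a polytope, but the underlying argument is the same.
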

\begin{proof}
  Let $\omega$ be one of the segments of $f$. This means that $f|_{\omega}$ is linear.

  For every segment $\eta$ of $g$, define $\omega_\eta = \omega \cap f^{-1}(\eta)$,
  the inverse image of $\eta$ in $\omega$.
  Then $(g \circ f)|_{\omega_\eta} = g|_\eta \circ f|_{\omega_\eta}$ is the composition
  two linear functions, and hence also linear.
  So if we partition $\omega$ into at most $k'$ subsegments $\omega_\eta$,
  then $g \circ f$ is linear on all those subsegments.
  Doing this for all segments $\omega$ of $f$,
  we have found a partition of the input space of $f$ of at most $k \cdot k'$
  segments on which $g \circ f$ is linear.
\end{proof}

\begin{theorem}
    Let $f: \mathbb{R}^n \rightarrow \mathbb{R}$ be KAN network with $L$ hidden layers as described in Equation \ref{eq:kan}.
    Suppose that each activation function $\phi$ in $f$ has at most $k$ segments.
    The total number of regions $\mathcal R(f)$ of $f$ has the following upper bound:
    \begin{equation*}
      \mathcal R(f) \leq k^{n_L + \sum_{i=0}^{L-1} n_i n_{i+1}}
    \end{equation*}
    where the $n_i$ are the widths of the layers.
\end{theorem}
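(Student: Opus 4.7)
The plan is to express the KAN as a long composition of simple piecewise linear maps and to invoke \cref{lem:piecewise-linear} repeatedly.

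First, I would count activation functions: layer $\ell$ (for $\ell = 0, \ldots, L-1$) has $n_\ell n_{\ell+1}$ activation functions $\phi_{i_\ell}^{\ell, i_{\ell+1}}$, and the output layer, viewed as a KAN layer with $n_{L+1} = 1$, contributes $n_L$ more. The total is therefore exactly $N := n_L + \sum_{i=0}^{L-1} n_i n_{i+1}$, which matches the exponent in the claimed bound.

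Next, I would decompose each KAN layer $\kanlayer_\ell \colon \bR^{n_\ell} \to \bR^{n_{\ell+1}}$ as $\kanlayer_\ell = S_\ell \circ \Psi_\ell^{(n_\ell n_{\ell+1})} \circ \cdots \circ \Psi_\ell^{(1)} \circ B_\ell$, where $B_\ell \colon \bR^{n_\ell} \to \bR^{n_\ell n_{\ell+1}}$ is the affine \emph{broadcast} map replicating each input coordinate $n_{\ell+1}$ times; each $\Psi_\ell^{(t)} \colon \bR^{n_\ell n_{\ell+1}} \to \bR^{n_\ell n_{\ell+1}}$ applies a single activation function $\phi_i^j$ to one coordinate and is the identity on the remaining coordinates (hence has at most $k$ segments); and $S_\ell \colon \bR^{n_\ell n_{\ell+1}} \to \bR^{n_{\ell+1}}$ is the affine \emph{sum} map collecting the appropriate copies into each output. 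The maps $B_\ell$ and $S_\ell$ are affine and so have exactly one segment.

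Chaining these decompositions across all $L+1$ layers, the entire KAN $f$ is then written as a composition of exactly $N$ coordinate-wise activation maps $\Psi$, each with at most $k$ segments, interspersed with affine maps having a single segment each. Iterating \cref{lem:piecewise-linear} along this chain, the segment count of the composition is bounded by the product of segment counts, yielding $k^N = k^{n_L + \sum_{i=0}^{L-1} n_i n_{i+1}}$, as required.

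The only real obstacle is careful bookkeeping: verifying that $B_\ell$, the $\Psi_\ell^{(t)}$, and $S_\ell$ actually compose to $\kanlayer_\ell$, and that every activation function is counted exactly once across all layers. Once this is in place, the bound follows directly by iterated application of \cref{lem:piecewise-linear}; the crude product bound $k^N$ absorbs any finer interactions (for instance, activation functions applied to the same input variable sharing breakpoints), making the argument slack but valid.
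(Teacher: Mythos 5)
Your proposal is correct, and the exponent you obtain matches the paper's: $N = n_L + \sum_{i=0}^{L-1} n_i n_{i+1}$ is exactly the total number of activation functions, which is also how the paper's bound arises. However, your decomposition is genuinely different from the paper's. The paper applies \cref{lem:piecewise-linear} only at the granularity of whole KAN layers: it first proves directly that a single KAN layer $\kanlayer_\ell \colon \bR^{n_\ell} \to \bR^{n_{\ell+1}}$ is piecewise linear with at most $k^{n_\ell n_{\ell+1}}$ segments, in two steps --- for a scalar-valued map $x \mapsto \sum_i \phi_i(x_i)$ it takes the Cartesian product of the segment partitions of the $\phi_i$ (giving $k^n$ boxes), and for a vector-valued layer it intersects the $m$ resulting partitions of the components (giving $(k^n)^m$ cells) --- and only then composes the $L+1$ layers via \cref{lem:piecewise-linear}. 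You instead push the composition lemma all the way down to individual activation functions, writing each layer as $S_\ell \circ \Psi_\ell^{(n_\ell n_{\ell+1})} \circ \cdots \circ \Psi_\ell^{(1)} \circ B_\ell$ with affine broadcast and sum maps contributing a factor of $1$ each and each single-coordinate activation map contributing a factor of at most $k$ (its segments being slabs, preimages of the segments of the underlying univariate function under a coordinate projection). This buys you a shorter argument: you never need the Cartesian-product or partition-intersection steps, since the one composition lemma does all the work. The trade-off is that you must verify the broadcast/activate/sum factorization actually reproduces $\kanlayer_\ell$ and that each activation function appears exactly once in the chain; your bookkeeping on both points is right, including treating the output layer as a KAN layer with output dimension $1$ contributing $n_L$ activation functions.
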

\begin{proof}

  We can write $f$ as a composition $f = \kanlayer_{L} \circ \mathellipsis \circ \kanlayer_0$
  of KAN layers $\kanlayer_i \colon \bR^{n_i} \to \bR^{n_{i+1}}$.
  We are going to prove that each KAN layer $\kanlayer_i$ is piecewise linear with at most $k^{n_i n_{i+1}}$ segments,
  so that the conclusion follows from \cref{lem:piecewise-linear}.

  First, consider a function
  \begin{equation*}
    \phi \colon \bR^n \to \bR, \quad x \mapsto \sum_{i=1}^n \phi_i(x_i)
  \end{equation*}
  where each $\phi_i$ is piecewise linear with at most $k$ segments.
  An example of such a function is $\kanlayer_L$.
  We will now prove that this map is piecewise linear with at most $k^n$ segments.

  For any $1 \leq i \leq n$, let $\omega_{i,1}, \mathellipsis, \omega_{i,k} \subset \bR$
  denote the $k$ segments of $\phi_i$,
  and write $\omega_{i_1, i_2, \mathellipsis, i_n} = \omega_{1,i_1} \times \cdots \times \omega_{n, i_n}$
  for the Cartesian product of one segment for every axis $i$.
  These $\omega_{i_1, i_2, \mathellipsis, i_n} = \omega_{1,i_1} \times \cdots \times \omega_{n, i_n}$
  partition the space $\bR^n$ into $k^n$ segments,
  and on every such segment, $\sum_i \phi_i$ is linear because each $\phi_i$ is linear.
  This proves that $\phi$ is piecewise linear with at most $k^n$ segments.

  Now we consider the more general function
  \begin{equation*}
    \phi \colon \bR^n \to \bR^m, \quad x \mapsto \left( \sum_{i=1}^n \phi_{i}^{j}(x_i) \right)_{j=1, \mathellipsis, m}
  \end{equation*}
  where each $\phi_{i}^{j}$ is a piecewise linear map with at most $k$ segments.
  All the $\kanlayer_\ell$ with $0 \leq \ell < L$ are of this form.
  As we will now prove, this map is piecewise linear with at most $k^{nm}$ segments.

  We write $\phi = (\phi_1, \mathellipsis, \phi_m)$,
  i.e., we write $\phi_j$ for the component maps $\phi_j \colon \bR^n \to \bR, x \mapsto \sum_{i=1}^n \phi_{i}^{j}(x_i)$.
  Our previous result shows that each $\phi_j$ is piecewise-linear with at most $k^n$ segments,
  so for each $j$ we have a partition of the input space $\mathbb R^n$
  into at most $k^n$ segments $\omega_{j,1}, \ldots, \omega_{j,k^n}$
  such that $\phi_j$ is continuous when restricted to that segment.
  By picking one such segment for each $1 \leq j \leq m$ and intersecting those chosen segments,
  we get $(k^n)^m$ subsegments of the form
  \begin{equation*}
    \omega_{j_1, \mathellipsis, j_m} := \omega_{1,j_1} \cap \cdots \cap \omega_{m,j_m}
  \end{equation*}
  that together partition the input space $\bR^n$.
  On each of these subsegments $\omega_{j_1, \mathellipsis, j_m}$,
  each $\phi_{j'}$ is linear, because $\omega_{j_1, \mathellipsis, j_m}$ is
  a subset of the segment $\omega_{j', j_{j'}}$ of $\phi_{j'}$.
  Since this is true for each $j'$, the map $\phi$ is linear on each such segment.
  This proves that $\phi$ is piecewise-linear with at most $k^{nm}$ segments,
  concluding the proof.
\end{proof}

Using this upper bound, we can approximate the number of regions that can be expressed per network parameter. 
For ReLU networks this ratio can be approximated by
\[  \dfrac{ \prod_{l=1}^{L} \sum_{j=0}^{d_l} {n_l\choose j} }{1+ n\times n_1 + 2\cdot n_L + \sum_{i=1}^{L-1} (n_i \times n_{i+1} + n_{i})}.
\]
For KANs the ratio is 
\[ \dfrac{k^{n_L + \sum_{i=0}^{L-1} n_i n_{i+1}}}{2k \sum_{l = 1}^{L+1} n_l}.
\]

This ratio is much bigger for KANs, which means that KANs are able to represent a finer polyhedral partition with fewer parameters.
While in general this may suggest that this is a more expressive class of piecewise linear functions, further research is needed to understand what functional class is represented. 

\begin{figure}
    \begin{center}
    \includegraphics[width=0.35\textwidth]{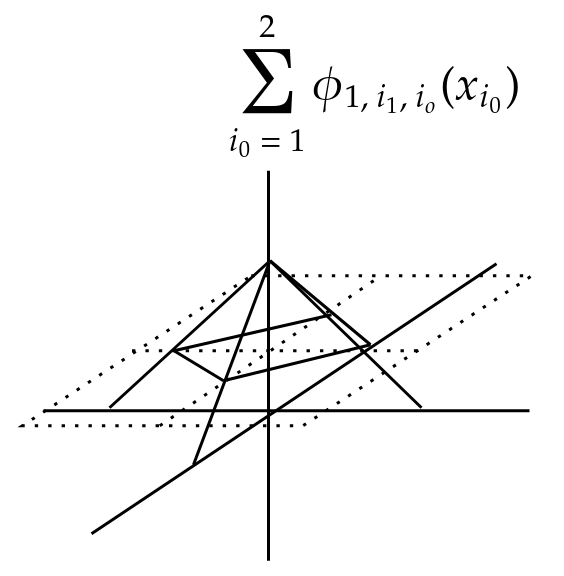}
    \end{center}
\caption{Sum of two activation functions that each have one breakpoint at the origin. A 2-dimensional hyperplane cuts through the pyramid.}
\label{fig:regions-sum}
\end{figure}

\section{CONCLUSION}

Our work develops the first analytical bridge between feedforward architectures and KANs. 
Specifically, in the context of piecewise linear functions, we are able to switch between the two representations. 
This allows users to leverage the trainability of ReLU networks and convert them to (piecewise linear) KANs in order to grab the interpretability benefits. 
In the other direction, transforming KANs into ReLU networks enables researchers and users to deploy existing techniques to analyse KANs, by importing tools from the rich literature on polyhedral decompositions \citep{huchette2023deep}, for example analysing symmetries in parameter space \citep{grigsby2023hidden} and extracting the polyhedral complex computationally \citep{montufar2014number, villani2023any, berzins2023polyhedral}. 

An important corollary of our work is that we show that any piecewise linear function can be expressed as a piecewise linear KAN.
This statement was already shown to be true for ReLU functions, i.e. any piecewise linear function can be expressed as a ReLU network \citep{he2018relu}.
Based on this fact and our transformation from ReLU networks to KANs, we can conclude the corollary.


In both directions we show the efficiency of our transformations: the transformed model in one direction (from ReLU to KANs) requires an extra linear term, and in the other direction (from KANs to ReLU) it requires no any extra non-zero parameters.

\subsection{Limitations and Future Work}

There are still a variety of open questions regarding the expressivity of KANs. 
For example,
suppose we have a piecewise linear function $\psi$,
what is the smallest (in terms of parameter count) KAN $f$ and ReLU network $g$ that can represent this function?

Polyhedral extraction methods, that compute the polyhedral partition and linear coefficients of each part, for KANs would unlock further interpretability benefits.
In particular, this would enable new insight into how parameters affect the linear parts of KANs. 


Future work could also focus on exploring methods to represent arbitrary finite piecewise linear functions with KANs and ReLU networks with a minimal number of parameters.
This would clarify the parameter efficiency of each architecture class, which could be useful for reducing storage costs and in mobile applications. 




\subsection*{Acknowledgements}

This work was supported by the EPSRC Grant EP/S023356/1 (\url{www.safeandtrustedai.org}).
We would like to thank Peter McBurney and the reviewers for their helpful comments.


\renewcommand{\refname}{}
\subsubsection*{References}

\bibliography{references}

\section*{Checklist}

 \begin{enumerate}

 \item For all models and algorithms presented, check if you include:
 \begin{enumerate}
   \item A clear description of the mathematical setting, assumptions, algorithm, and/or model. [Yes]
   \item An analysis of the properties and complexity (time, space, sample size) of any algorithm. [Not Applicable]
   \item (Optional) Anonymized source code, with specification of all dependencies, including external libraries. [Not Applicable]
 \end{enumerate}

 \item For any theoretical claim, check if you include:
 \begin{enumerate}
   \item Statements of the full set of assumptions of all theoretical results. [Yes]
   \item Complete proofs of all theoretical results. [Yes]
   \item Clear explanations of any assumptions. [Yes]     
 \end{enumerate}

 \item For all figures and tables that present empirical results, check if you include:
 \begin{enumerate}
   \item The code, data, and instructions needed to reproduce the main experimental results (either in the supplemental material or as a URL). [Yes/No/Not Applicable]
   \item All the training details (e.g., data splits, hyperparameters, how they were chosen). [Not Applicable]
         \item A clear definition of the specific measure or statistics and error bars (e.g., with respect to the random seed after running experiments multiple times). [Not Applicable]
         \item A description of the computing infrastructure used. (e.g., type of GPUs, internal cluster, or cloud provider). [Not Applicable]
 \end{enumerate}

 \item If you are using existing assets (e.g., code, data, models) or curating/releasing new assets, check if you include:
 \begin{enumerate}
   \item Citations of the creator If your work uses existing assets. [Not Applicable]
   \item The license information of the assets, if applicable. [Not Applicable]
   \item New assets either in the supplemental material or as a URL, if applicable. [Not Applicable]
   \item Information about consent from data providers/curators. [Not Applicable]
   \item Discussion of sensible content if applicable, e.g., personally identifiable information or offensive content. [Not Applicable]
 \end{enumerate}

 \item If you used crowdsourcing or conducted research with human subjects, check if you include:
 \begin{enumerate}
   \item The full text of instructions given to participants and screenshots. [Not Applicable]
   \item Descriptions of potential participant risks, with links to Institutional Review Board (IRB) approvals if applicable. [Not Applicable]
   \item The estimated hourly wage paid to participants and the total amount spent on participant compensation. [Not Applicable]
 \end{enumerate}

 \end{enumerate}

\newpage 
\appendix

\onecolumn
\section{AN ANALOGOUS CONVERSION FOR KANS WITH B-SPLINE ACTIVATION FUNCTIONS}
\label{appendix:b-spline-kans}

In this appendix, we describe a conversion that applies to KANs with B-spline activation functions rather than KANs with piecewise linear activation functions, but that is very similar in other respects.
The target of this conversion is not a standard ReLU network:
feedforward networks with ReLU activations
represent piecewise linear functions,
and since KANs with B-spline activations are not necessarily
piecewise linear, this is impossible.
Instead, to carry out an analogous operation,
we need to introduce an unconventional architecture
that combines both ReLU activations (for the breakpoints in the splines) and monomial activations (for the polynomials).

Specifically, we convert a KAN with B-spline activations of degree at most $r$,
to the following architecture that we call a (ReLU, $x^r$)-architecture.
A block in this architecture consists of the following:
\begin{enumerate}
    \item an affine linear layer $\mathbb R^n \to \mathbb R^{(r+1)n'}$ for any integers $n, n' \geq 1$; followed by
    \item a ReLU layer $\bR^{(r+1)n'} \to \bR^{(r+1)n'}$,
      i.e., a component-wise application of $\relu{}$; and lastly
    \item
       the monomial activation functions $\sigma_r \colon \bR^{(r+1)n'} \to \bR^{(r+1)n'}$
       that on each of the $n'$ copies of $\bR^{r+1}$ inside
       $\bR^{(r+1)n'}$ are defined by
       $\sigma_r(y_0, y_1, \ldots, y_r) = (1, y_1^1, y_2^2, \ldots, y_r^r)$, i.e., it is the monomial $x^j$ on the $j$-th component.  
\end{enumerate}

\paragraph{Converting a (ReLU, $x^r$)-architecture to a KAN with B-splines}
This direction is simple, and is based on the same idea as for piecewise linear KANs (see \cref{thm:relu-to-kan}):
every activation function in the (ReLU, $x^r$)-architecture is in particular a spline, so you can directly replace each of the 3 types of layers in a block directly with a KAN layer.

\paragraph{Converting a KAN with B-splines to a (ReLU, $x^r$)-architecture}
This direction is more involved, but also roughly mimics the core ideas from the proof of \cref{thm:kan-to-relu}:
for each breakpoint $b_i$ in a spline, we create intermediate neurons to represent the function $x \mapsto \relu(x - b_i)$
(similar to \cref{fig:small-network}), and then post-compose these functions with monomials to create a telescoping sum
of polynomials that is equal to the original spline.
(For piecewise linear KANs, we do not use monomials, but linear functions, as illustrated in \cref{fig:small-network}.)
We do this for every layer in the KAN, and the result is a (ReLU, $x^r$)-architecture.
The rest of this appendix explains this construction in more detail.

Let's first take the example of a polynomial of degree $r$: 

$$p = \sum_{i = 0}^r a_i x^i.$$

We can write:

$$p = P_\mathbf a (x):=\mathbf a \sigma_r(\mathbf 1_{r+1}^Tx),$$
where $x\in \mathbb{R}, \mathbf{1}_{r+1} \in \mathbb{R}^{r+1}$ is a vector of ones, $\mathbf{a} = (a_0, \ldots a_r) \in \mathbb{R}^{r+1}$ is a vector of coefficients, and where $\sigma_r \colon \mathbb R^{r+1} \to \mathbb R^{r+1}$ is the map of monomial activation functions defined above,
i.e., it operates element-wise and raises the $i$-th coordinate to the $i$-th power:
$$\sigma_r(\mathbf y)_i = y_i^i \qquad \text{for } \mathbf y = (y_0, y_1, \ldots, y_r) \in \mathbb R^{r+1}.$$

This expression shows that we can use the monomial activation functions to create
a (ReLU, $x^r$)-network that represents an arbitrary polynomial.
It remains to show that we can combine this with the ReLU activation functions
to create splines.

For a polynomial $P_{\mathbf a}$ as above and a threshold $b \in \mathbb R$, we define $P_{\mathbf a}^b$ as the polynomial
that satisfies
$$P_{\mathbf a }^{b}(x) := P_{\mathbf a }(x+b).$$
With this polynomial and the $\text{ReLU}$ activation function, we can create a (ReLU, $x^r$)-network that exactly represents the function

$$P_{\mathbf a }^{b}(\text{ReLU}(x - b)).$$

This function has the useful property that on $x \geq b$, it is equal to $P_{\mathbf a}(x)$, and on $x \leq b$, it is constant and equal to $P_{\mathbf a}(b)$.

Having shown that we can represent the functions $P_{\mathbf a}^b(x)$
in a (ReLU, $x^r$)-architecture,
we can now use these functions to represent any single-valued B-spline with finitely many pieces as follows.
Given a B-spline, let $b_1, \ldots, b_{k-1}$ denote the breakpoints between the segments, and denote the $k$ polynomial functions on the polynomial segments by $P_{\mathbf a_1}(x), \ldots, P_{\mathbf a_k}(x)$. (This is similar to \cref{fig:small-network}, but with polynomials instead of linear functions.) This B-spline is exactly represented by the function

\begin{equation}
\label{eq:b-spline-to-monomial-relu}
P_{\mathbf a_1}(x) + (P_{\mathbf a_2}^{b_1} - P_{\mathbf a_1}^{b_1})(\text{ReLU}(x - b_1)) + ... + (P_{\mathbf a_k}^{b_{k-1}} - P_{\mathbf a_{k-1}}^{b_{k-1}})(\text{ReLU}(x - b_{k-1})).    
\end{equation}

This can be easily checked using the property for $P_{\mathbf a}^b$ mentioned above, from which it follows that for any $1 \leq i \leq k$ and $x \leq b_i$, all but the first $i$ terms cancel out, and the first $i$ terms form a telescoping sum that is equal to $P_{\mathbf a_1}(x) + (P_{\mathbf a_2}(x) - P_{\mathbf a_1}(x)) + \ldots + (P_{\mathbf a_i}(x) - P_{\mathbf a_{i - 1}}(x)) = P_{\mathbf a_i}(x)$. This is analogous to what we do in the proof of Lemma 1.

The function in \cref{eq:b-spline-to-monomial-relu} 
can be represented by the (ReLU, $x^r$)-architecture
because it is a sum of polynomials that can be represented by the (ReLU, $x^r$)-architecture.
We can now reason in the same way as in the main text: because we can convert the activation functions in the KAN,
we can stack and concatenate (ReLU, $x^r$)-networks to convert the entire KAN.

\end{document}